\providecommand{\institute}[1]{
  \apptocmd{\@author}{\end{tabular}
    \bigskip

    \begin{tabular}[t]{c}
    #1}{}{}
} 
\newtheorem{theorem}{Theorem}
\newtheorem{lemma}[theorem]{Lemma}
\newtheorem{corollary}[theorem]{Corollary}
\newtheorem{proposition}[theorem]{Proposition}
\newtheorem{definition}[theorem]{Definition}
\newtheorem{example}[theorem]{Example}
\newtheorem{observation}[theorem]{Observation}
\newcommand{\CAF}{CF}
\newcommand{\CF}{G}
\newcommand{\tw}[1]{\mathsf{tw}(#1)}
\newcommand{\primal}[1]{\mathcal{G}_{#1}}
\DeclareMathOperator{\adef}{def}
\newcommand{\forg}{\mathsf{last}}
\newcommand{\conf}{\mathsf{conf}}
\newcommand{\adm}{\mathsf{adm}}
\newcommand{\comp}{\mathsf{comp}}
\newcommand{\pref}{\mathsf{pref}}
\newcommand{\stab}{\mathsf{stab}}
\newcommand{\tight}{\mathsf{tight}}
\newcommand{\normal}{\mathsf{normal}}
\newcommand{\prop}{\mathsf{prop}}
\newcommand{\props}{\mathsf{propositional}}
\newcommand{\simple}{\mathsf{simple}}
\newcommand{\disj}{\mathsf{disj}}
\newcommand{\disjs}{\mathsf{disjunctive}}
\newcommand{\stag}{\mathsf{stag}}
\newcommand{\semi}{\mathsf{semiSt}}
\newcommand{\ALL}{\ensuremath{\mathbb{S}}\xspace}
\newcommand{\sem}{\ensuremath{\sigma}\xspace}
\newcommand{\TQSAT}{\textsc{2-}\QSAT\xspace}
\newcommand{\QSAT}{\textsc{QSat}\xspace}
\newcommand{\BSAT}{\textsc{Sat}\xspace}
\newcommand{\Stab}{\ensuremath{\mathsf{stab}}\xspace}
\DeclareMathOperator{\width}{\mathsf{width}}
\newcommand{\Card}[1]{\left|#1\right|}
\newcommand{\TTT}{\ensuremath{\mathcal{T}}}%
\newcommand{\at}{\var} %
\newcommand{\eqdef}{\coloneqq}
\newcommand{\SB}{\ensuremath{\{\,}}
\newcommand{\SE}{\ensuremath{\,\}}}
\newcommand{\SM}{\ensuremath{\;|\;}}
\newcommand{\matr}{\mathrm{matr}}
\newcommand{\var}{\mathrm{var}}
\newcommand{\class}[1]{\textbf{#1}}
\newcommand{\Simple}[0]{\class{Simple}\xspace}
\newcommand{\Normal}[0]{\class{Normal}\xspace}
\newcommand{\Tight}[0]{\class{Tight}\xspace}
\newcommand{\Prop}[0]{\class{Prop.}\xspace}
\newcommand{\Disj}[0]{\class{Disj}\xspace}
\DeclareMathOperator{\children}{children}
\DeclareMathOperator{\tower}{\ensuremath{\mathsf{tower}}}
\DeclareMathOperator{\poly}{\ensuremath{{poly}}}
\def\hy{\hbox{-}\nobreak\hskip0pt}
\newcommand{\cred}{\mathrm{cred}}
\newcommand{\cons}{\mathsf{cons}}
\newcommand{\rootOf}{\mathrm{rt}}
\newcommand{\brm}[1]{\mathbf{\mathrm{#1}}}
\newcommand{\citey}[1]{\cite{#1}}
\newcommand{\RAF}[0]{RCF\xspace}
\newcommand{\camera}[1]{}
\title{Rejection in Abstract Argumentation: \\Harder Than Acceptance?}
\author{Johannes K. Fichte$^{1}$ \and Markus Hecher$^2$ \and Yasir Mahmood$^3$ \and Arne Meier$^4$}
\institute{$^1$: Link\"oping University, $^2$: Massachusetts Institute of Technology\\ $^3$: Universit\"at Paderborn, $^4$: Leibniz Universit\"at Hannover}
\begin{document}
\maketitle
\begin{abstract}
    Abstract argumentation is a popular toolkit for modeling,
    evaluating, and comparing arguments.  Relationships between
    arguments are specified in argumentation frameworks (AFs), and
    conditions are placed on sets (extensions) of arguments that allow
    AFs to be evaluated.  For more expressiveness, AFs are augmented
    with \emph{acceptance conditions} 
    on directly interacting arguments or a constraint on the admissible
    sets of arguments, resulting in dialectic frameworks or constrained
    argumentation frameworks.
    In this paper, we consider flexible conditions for \emph{rejecting}
    an argument from an extension, which we call rejection conditions
    (RCs).  On the technical level, we associate each argument with a
    specific logic program.
    We analyze the resulting complexity, including the structural
    parameter treewidth.
    Rejection AFs are highly expressive, giving rise to natural problems
    on higher levels of the polynomial hierarchy.
\end{abstract}

\section{Introduction}
Argumentation is a popular area of AI~\cite{AmgoudPrade09a,RagoCocarascuToni18a}.
An important computational framework, also known as abstract argumentation is Dung's argumentation framework (AF)~\cite{Dung95a}.
AFs are widely used for modeling, evaluating, and comparing arguments.
The semantics is based on sets of arguments that satisfy certain conditions such as being admissible, stable, or preferred. 
Such sets of arguments are then called \emph{extensions} of a framework.

Over time, various weaknesses of the somewhat simple modeling of arguments via AFs have been dictated, followed by proposed extensions, thereby providing another layer of “acceptance” conditions for arguments. 
For instance, Constrained Argumentation Frameworks (CAFs) \cite{Coste-MarquisDevredMarquis06a} or Abstract Dialectical Frameworks (ADFs)~\cite{BrewkaWoltran10}. 
\begin{table*}[t]
	\resizebox{\linewidth}{!}{
          \begin{tabular}{ccccc}
            \toprule
            \backslashbox{\textbf{RC}}{\textbf{Problem}}  & 
            $\cons_{\sigma}$/$\cred_{\tau}$ & 
            TW: $\cons_{\sigma}$/$\cred_{\tau}$ & 
            $\cred_{\semi}$/$\cred_{\stag}$ & 
            TW: $\cred_{\semi}$/$\cred_{\stag}$ \\
            \midrule
            $\Simple$ & 
            $\NP$ {\footnotesize(\ref{thm:hard}, \ref{thm:membership}/\ref{cor:cred})} & 
            $\tower(1, \Theta(k))$ {\footnotesize(\ref{thm:simple}, \ref{thm:tw-lb})} & 
            $\SigmaP{2}$ {\footnotesize(\ref{thm:credsemistab})} & 
            $\tower(2, \Theta(k))$ {\footnotesize(\ref{thm:disj-tw-ub}, \ref{thm:tw-lb})} \\
            $\Prop$ / $\Tight$ & 
            $\SigmaP{2}$ {\footnotesize(\ref{thm:hard}, \ref{thm:membership}/\ref{cor:cred})} & 
            $\tower(2, \Theta(k))$ {\footnotesize(\ref{thm:prop}/\ref{thm:tight}, \ref{thm:tw-lb})} & 
            $\SigmaP{3}$ {\footnotesize(\ref{thm:credsemistab})} & 
            $\tower(3, \Theta(k))$ {\footnotesize(\ref{thm:disj-tw-ub}, \ref{thm:tw-lb})} \\
            $\Disj$ & 
            $\SigmaP{3}$ {\footnotesize(\ref{thm:hard}, \ref{thm:membership}/\ref{cor:cred})} & 
            $\tower(3, \Theta(k))$ {\footnotesize(\ref{thm:disj-tw}, \ref{thm:tw-lb})} & 
            $\SigmaP{4}$ {\footnotesize(\ref{thm:credsemistab})} & 
            $\tower(4, \Theta(k))$ {\footnotesize(\ref{thm:disj-tw-ub}, \ref{thm:tw-lb})} \\
            \bottomrule
          \end{tabular}}
	\caption{%
          Each row shows the results for a particular class of programs, denoted by the first column. 
          The 2\textsuperscript{nd} column shows the complexity to deciding whether a given \RAF $\CF$ has an extension under semantics $\sigma\in \{\conf,\adm, \comp, \stab, \semi, \stag\}$ or the credulous reasoning problem under semantics $\tau \in \{\conf,\adm, \comp, \stab\}$.
          The 3\textsuperscript{rd} column states the tight bound factor in $k=\tw{\primal{\CF}}$ with respect to the extension existence problem for~$\sigma$.
          The 4\textsuperscript{th} column gives the complexity for credulous reasoning and semantics $\semi$ and $\stag$, followed by its tight bound factor in~$k$ (5\textsuperscript{th} column). 
          All stated results are complete with respect to the respective class of semantics. 
          Numbers in brackets point to our results below. %
      }
\label{tbl:overviewresults}
\end{table*}
But what happens, if we are interested in rejecting conditions?
Some semantics implicitly define the concept of rejection by
(subset\hy)maximality conditions,~e.g., preferred or stage semantics.
However, real-life is more complicated and goes beyond simple
maximization (see Example~\ref{ex:CCF-complex}).
In this paper, we extend argumentation frameworks by flexible concepts
of rejection.
Since in knowledge representation (KR) global constraints are
oftentimes expressed by means of propositional logic, we pursue our
line of work in this direction.
Naturally, we attach logical formulas to arguments, which is common in
argumentation~\cite{Coste-MarquisDevredMarquis06a, BrewkaWoltran10,
  HeyninckThimmKern-Isberner22a}.
The models of these formulas form \emph{counterclaims} to 
arguments being accepted.
As a result the presence of such
counterclaims forms the concept of rejection in AFs, which we call
\emph{rejection framework (\RAF)}.
Interestingly, under reasonable complexity-theoretic assumptions, it turns out that it is harder to express rejection
than acceptance. 
Indeed, compared to other approaches to model acceptance  (e.g., CAFs), the concept of rejection increases computational
hardness by one level already when we allow auxiliary variables.
To express even harder rejection constraints, we study the case where
CCs can be represented as the models (answer sets) of a logic
program~\cite{GebserKaminskiKaufmannSchaub12}, since ASP is commonly
used for argumentation~\cite{GagglEtAl15}.
Specifically, we associate each argument~$a$ with a logical
program as its \emph{rejection condition (RC)}.
When computing extensions, these RCs must be altogether
invalidated and thus no longer counter-attack their arguments. 
This
results in a compact and natural representation of rejection in AFs.
Further, we depict via an example  (Ex.~\ref{ex:CCF-complex}) a scenario where such rejection conditions can not be modeled in AFs without losing the extensions. 

\newcommand{\ns}{\brm{noS}}
\newcommand{\tr}{\brm{T}}
\newcommand{\pr}{\brm{P}}
\newcommand{\pwr}{\brm{W}}
\newcommand{\dl}{\brm{D}}
\newcommand{\nodl}{\brm{noDl}}
\newcommand{\tea}{\brm{Te}}
\newcommand{\re}{\brm{Re}}

\paragraph{Contributions.} 
We complete a previously incomplete picture as rejection was implicitly studied but rarely made explicit.
We naturally extend existing frameworks rather than introducing a new framework. 
Interestingly, this yields a very general understanding of rejection and acceptance in argumentation.

In summary, our main contributions are:
\begin{enumerate}
\item We study flexible rejecting conditions in abstract
  argumentation, which provide more fine-grained conditions to
  commonly studied rejection such as maximality.

\item We investigate whether rejection is harder than its counterpart
  of acceptance conditions. Therefore, we research the complexity of
  decision and reasoning problems, including when restricting
  rejection formulas. We establish detailed complexity results, which we
  survey in Table~\ref{tbl:overviewresults}. Indeed, rejection
  conditions turn out to be harder than accepting conditions.

\item We include the structural parameter treewidth into our
  complexity theoretical considerations and establish matching runtime
  upper and lower bounds.
\end{enumerate}

\paragraph{Related Works.}
\emph{Constrained Argumentation Frameworks (CAFs)} extend AFs with accepting condi\-tions~\cite{Coste-MarquisDevredMarquis06a}, in more detail, a single formula that must be satisfied by the extension. 
The computational complexity remains the same.
For work on CAFs,
see~\cite{DBLP:conf/argmas/AmgoudDL08,DBLP:conf/ipmu/SedkiY16,DBLP:conf/comma/DevredDLN10}.
\emph{Abstract Dialectical Frameworks (ADFs)}~\cite{BrewkaWoltran10}
extend AFs with propositional formulas that state conditions
on arguments in the direct neighborhood. The acceptance condition of
an argument depends on its parent arguments in the attack graph.
There are also versions of ADFs that implement
three-valued~\cite{BrewkaStrassEllmauthaler13a} and four-valued
logics~\cite{HeyninckThimmKern-Isberner22a}.
\emph{Claim-augmented AFs}~\cite{DvorakGresslerRapberger21a} attach
claims to arguments, where arguments in the extensions imply their
respective claims. 
\emph{Assumption-based AFs (ABAs)} take an inference system, such as
logic programs, to specify attacks among
arguments~\cite{BondarenkoDungKowalski1997,DungKowalskiToni06,DungMancarellaToni2007}.
The semantics for ABAs are defined in the usual way, their complexity is well
understood~\cite[Tab.\ 9]{DvorakD17}, and relationship between ABA and logic programming
is well investigated~\cite{SaAlcantara21a}.
Answer Set Programming~\cite{GebserKaminskiKaufmannSchaub12} is a
logic programming modeling and problem solving paradigm where
questions are encoded by atoms combined into rules and constraints
which form a logic program. Preferences allow for stating dedicated
acceptance conditions~\cite{BrewkaDelgrandeRomero15a}.  Answer Set
Programming and their respective solvers have commonly been used for
finding extensions of
AFs~\cite{OsorioEtAl05,GagglEtAl15,CaminadaSchulz17,AlfanoEtAl21}.

\emph{Logic-based argumentation}~\cite{DBLP:journals/ai/BesnardH01} is a different branch of argumentation where one uses logic to construct arguments and model their relationships.
RCFs lets one directly assign rejection conditions to arguments in an AF. These conditions are somewhat independent of each other as they do not need to relate to each other or to arguments in a formal deductive way, which differentiates RCFs and ABA.

\section{Preliminaries}
We assume familiarity with complexity~\cite{Pippenger97}, graph theory~\cite{BondyMurty08}, and
logic~\cite{BiereEtAl21}.
For a set~$X$, we denote $X'\eqdef\{\,x'\mid x\in X\,\}$.

\newcommand{\por}{\ensuremath{\vee}}
\newcommand{\unsim}{\mathord{\sim}}
\newcommand{\pnot}{\ensuremath{\unsim}}

\paragraph{Abstract Argumentation.}
We use Dung's AF~\cite{Dung95a} and consider only non-empty and finite sets of arguments~$A$.
An \emph{(argumentation) framework~(AF)} is a directed graph~$F=(A, R)$ where $A$ is a set of arguments and $R \subseteq A\times A$  a pair of arguments
representing direct attacks of arguments.
An argument~$s \in S$, is called \emph{defended by $S$ in $F$} if for every $(s', s) \in R$, there exists $s'' \in S$ such that $(s'', s') \in R$.  
The set~$\adef_F(S)$ is defined by $\adef_F(S) \eqdef\{ s \mid s \in A, s \text{ is defended by $S$ in $F$}  \}$.

In abstract argumentation, one strives for computing so-called \emph{extensions}, which are subsets~$S \subseteq A$ of the arguments that have certain properties.
The set~$S$ of arguments is called \emph{conflict-free in~$S$} if $(S\times S) \cap R = \emptyset$; $S$ is \emph{admissible in~$F$} if
(1) $S$ is \emph{conflict-free in $F$}, and
(2) every $s \in S$ is \emph{defended by $S$ in $F$}.
Let $S^+_R:=S\cup\{\, a\mid (b,a)\in R, b \in S\, \}$ and $S$ be admissible. 
Then, $S$ is
a) \emph{complete in~$F$} if $\adef_F(S) = S$;
b) \emph{semi-stable in $F$} if no admissible set $S' \subseteq A$ in~$F$ with~$S^+_R\subsetneq (S')^+_R$ exists; and 
c) \emph{stable in~$F$} if every $s \in A \setminus S$ is \emph{attacked} by some $s' \in S$.
A conflict-free set~$S$ is \emph{stage in $F$} if there is no conflict-free set~$S'\subseteq A$ in~$F$ with~$S^+_R\subsetneq (S')^+_R$.
Let $\ALL^*\eqdef\{\conf,\adm, \comp, %
\stab\}$ and $\ALL\eqdef \ALL^* \cup \{\semi, \stag\}$. %
For a semantics~$\sem \in \ALL$, we write $\ext{\sigma}(F)$ for the set of \emph{all extensions} of~$\sem$ in $F$.

Let AF~$F{\,=\,}(A,R)$ be an additionally given framework.
Then, the problem~$\cons_{\sem}$ asks if $\ext{\sigma}(F)\neq\emptyset$, and
$\cred({\sem})$ %
asks for given %
$c{\,\in\,}A$, whether~$c$ is in some $S\in \ext{\sigma}(F)$ (``\emph{credulously} accepted'').

\begin{figure}
  \centering
  \begin{tikzpicture}[scale=.8,arg/.style={circle,fill=black,inner sep=.75mm},y=.6cm]
    \node[arg, label={\bfseries noS}] (C) at (0,0) {};
    \node[arg, label={180:\bfseries T}] (T) at (-1,-1) {};
    \node[arg, label={[label distance=-1mm]0:\bfseries P}] (E) at (0,-1) {};
    \node[arg, label={0:\bfseries W}] (V) at (1,-1) {};
    \node[arg, fill=white, label={270:\color{white}\bfseries D}] (R) at (-.5,-2) {};
    \foreach \f/\t in {C/T,C/E,V/C}{
      \path[-stealth'] (\f) edge (\t);
    }
  \end{tikzpicture}\\[-3em]
  \caption{Argumentation framework for Example~\ref{ex:AF}.}\label{fig:af-simple}
\end{figure}
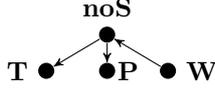

\begin{example}\label{ex:AF} %
  Consider an AF that models aspects of a conference
  submission, Figure~\ref{fig:af-simple}.
  If we have \ns{}ubmission, we cannot \tr{}ravel to a conference and
  \pr{}resent.
  Having a paper \pwr{}ritten up attacks \ns{}ubmission.
  Hence, a stable extension is $E=\{\pwr,\tr,\pr\}$. \qedexample
\end{example}

\paragraph{Answer Set Programs (ASP).} 
We consider a universe~$U$ of propositional \emph{atoms}; a
\emph{literal} is an atom~$a\in U$ or its negation~$\neg a$. A
\emph{program}~$\mathcal{P}$ is a set of \emph{rules} of the form $a_1\por \dots \por a_l \leftarrow b_1,\dots,b_n$, $\pnot c_1, \dots,
\pnot c_m$ where $a_1,\dots,a_l,b_1,\dots,b_n,c_1,\dots, c_m$
are atoms. %
We write
$H(r){=}\{a_1, \dots, a_l\}$, called \emph{head} of $r$,
$B^+(r){=} \{b_1, \dots, b_n\}$, called \emph{positive body} $r$, and
$B^-(r) {=} \{c_1, \dots, c_m\}$, called \emph{negative body} of~$r$.
The atoms in a rule~$r$ or program~$\mathcal{P}$ are given by $\at(r){=}H(r) {\cup} B^+(r) {\cup} B^-(r)$ and
$\at(\mathcal{P}){=}\bigcup_{r\in \mathcal{P}} \at(r)$, respectively.
We sometimes view a rule~$r\in\mathcal{P}$ as a set~$H_r\cup B_r^-\cup \bigcup_{x\in B_r^+}\{\neg x\}$ of literals.
Consider digraph~$D_\mathcal{P}$ of a program~$\mathcal{P}$ which has vertices~$\at(\mathcal{P})$ and a directed edge $(x,y)$ between any two atoms
$x,y \in \at(\mathcal{P})$ if there is $r\in \mathcal{P}$ with $x \in H(r)$
and $y\in B^+(r)$.
Then, $\mathcal{P}$ is \emph{tight} if %
there is no cycle in $D_\mathcal{P}$~\cite{KanchanasutStuckey92,Ben-EliyahuDechter94}.
We refer by \class{Disj} to the class of \emph{(disjunctive) programs}.
By \Normal we mean programs containing only rules $r$ with $\Card{H(r)}\leq 1$, so called \emph{normal} rules.
The class~\Tight refers to tight programs.

A set~$M$ of atoms \emph{satisfies} a rule~$r$ if
$(H(r)\,\cup\, B^-(r)) \,\cap\, M \neq \emptyset$ or
$B^+(r) \setminus M \neq \emptyset$.  $M$ is a \emph{model} of~$\mathcal{P}$ if
it satisfies all rules of~$\mathcal{P}$, we write $M \models \mathcal{P}$ for short.
The \emph{(GL) reduct} of a program~$\mathcal{P}$ under a
set~$M$ of atoms is program~$\mathcal{P}^M \coloneqq \SB H(r) \leftarrow B^+(r) \SM r \in \mathcal{P}, M \cap
B^-(r) = \emptyset\SE$~\cite{GelfondLifschitz91}.
$M$ is an \emph{answer set} %
  program
  $\mathcal{P}$ if $M$ is a $\subseteq$-minimal model of~$\mathcal{P}^M$. %
For tight programs this is equivalent to~$M \models \mathcal{P}$ such that
every~$a\in M $ is \emph{justified by~$\mathcal{P}$},~i.e., there has to
exist~$r\in\at(\mathcal{P})$ with~$a\in H(r)$, $M \supseteq B^+(r)$
and~$M \cap B^-(r)=\emptyset$~\cite{LinZhao03}.
Deciding whether a program has an answer set
(\emph{consistency problem}) is
$\SigmaP2$-complete~\cite{EiterGottlob95}. %

\paragraph{QBFs.}
Let $\ell$ be a positive integer, called \emph{(quantifier) rank}, and $\top$ and $\bot$ be the constant always evaluating to $1$ and $0$, respectively.
A conjunction or disjunction of literals is called a term or clause, respectively.
For a Boolean formula~$F$, we write $\var(F)$ for the variables occurring in~$F$ and $F(X_1,\ldots,X_\ell)$ to indicate that $X_1,\ldots,X_\ell\subseteq \var(F)$.
A \emph{quantified Boolean formula (QBF)}~$\phi$ is %
of the form $\phi=Q_1 X_1.Q_2 X_2.\cdots Q_\ell X_\ell. F(X_1,\dots,X_\ell)$, where for $1\leq i\leq \ell$, we have $Q_i\in\{\forall,\exists\}$ and $Q_i \neq Q_{i+1}$, the $X_i$ are disjoint, non-empty sets of Boolean variables, and $F$ is a Boolean formula. We let $\matr(\phi)\dfn F$ and %
we say that $\phi$ is \emph{closed} if $\var(F)= \bigcup_{i \in \ell}X_i$.
We evaluate $\phi$ by $\exists x.\phi\equiv \phi[{x\mapsto 1}]\lor\phi[{x\mapsto 0}]$ and $\forall x.\phi\equiv \phi[{x\mapsto 1}]\land\phi[{x\mapsto 0}]$ for a variable~$x$.
We assume that $\matr(\phi)=\psi_{\text{CNF}} \wedge \psi_{\text{DNF}}$, 
where $\psi_{\text{CNF}}$ is in CNF (conjunction of clauses) and $\psi_{\text{DNF}}$ is in DNF (disjunction of terms).

Syntactically, we often view a Boolean CNF formula~$\psi_{\text{CNF}}$ and a DNF formula~$\psi_{\text{DNF}}$ as a set of sets of literals. %
Then, depending on $Q_\ell$, either $\psi_{\text{CNF}}$ or $\psi_{\text{DNF}}$ is optional, more precisely,  $\psi_{\text{CNF}}$ might be $\top$, if $Q_\ell=\forall$, and $\psi_{\text{DNF}}$ is allowed to be $\top$, otherwise. 
The problem $\ell\hy\QSAT$ asks, given a closed QBF $\phi=\exists X_1.\phi'$ of
rank~$\ell$, whether $\phi\equiv1$ holds.

\paragraph{Tree Decompositions and Treewidth.}
For a rooted (directed) tree~$T=(N,A)$ with \emph{root~$\rootOf(T)$} and a node~$t \in N$, let $\children(t)$ be the set of all nodes~$t^*$, which have an edge~$(t,t^*) \in A$. 

Let $G=(V,E)$ be a graph. 
A \emph{tree decomposition (TD)} of a graph~$G$ is a pair $\TTT=(T,\chi)$, where $T$ is a rooted tree, and $\chi$ is a mapping that assigns to each node $t$ of $T$ a set $\chi(t)\subseteq V$, called a \emph{bag}, such that:
(1.) $V=\bigcup_{t\text{ of }T}\chi(t)$ and $E \subseteq\bigcup_{t\text{ of }T}\{ \{u,v\} \mid u,v\in \chi(t)\}$, 
(2.) for each $s$ lying on any $r$-$t$-path: $\chi(r) \cap \chi(t) \subseteq \chi(s)$.
Then, define $\width(\TTT) \eqdef \max_{t\text{ of }T}\Card{\chi(t)}-1$.  
The \emph{treewidth} $\tw{G}$ of $G$ is the minimum $\width({\TTT})$ over all TDs $\TTT$ of $G$. 
Observe that for every vertex~$v\in V$, there is a unique node~$t^*$ with~$v\in \chi(t^*)$ such that either~$t^*=\rootOf(T)$ or there is a node $t$ of~$T$ with~$\children(t){=}\{t^*\}$ and $v\notin\chi(t)$.
We refer to the node~$t^*$ by~$\forg(v)$.
We assume TDs~$(T,\chi)$, where for every node $t$,  we have $\Card{\children(t)}\leq 2$, obtainable in linear time without width increase~\cite{BodlaenderKoster08}. 

For a given QBF~$\phi$ with~$\matr(\phi)=\psi_{\text{CNF}} \wedge \psi_{\text{DNF}}$, we define the \emph{primal graph~$\primal{\phi}=\primal{\matr(\phi)}$}, whose vertices are~$\var(\matr(\phi))$. Two vertices of~$\primal{\phi}$ are adjoined by an edge,
whenever corresponding variables share a clause or term of~$\psi_{\text{CNF}}$ or~$\psi_{\text{DNF}}$.
Let $\tower(i,p)$ be $\tower(i-1,2^p)$ if $i>0$ and $p$ otherwise.
Further, we assume that~$\poly(n)$ is any polynomial for given positive integer~$n$.

\begin{proposition}[\citey{Chen04a}]\label{prop:qbf}
For any arbitrary QBF~$\phi$ of quantifier rank~$\ell>0$, the problem $\ell\hy\QSAT$ can be solved in time $\tower(\ell, \mathcal{O}(\tw{\primal{\varphi}}))\cdot\poly(\Card{\var(\phi)})$. %
\end{proposition}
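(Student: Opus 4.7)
The plan is to do dynamic programming on a nice tree decomposition of the primal graph $\primal{\phi}$. First compute a TD of width $k = \mathcal{O}(\tw{\primal{\phi}})$ in FPT time using Bodlaender's algorithm (or a constant-factor approximation thereof) and convert it to a nice form with leaf, introduce, forget, and join nodes, consistent with the normalization $|\children(t)|\le 2$ already assumed in the excerpt. The key structural property I would exploit is that every clause of $\psi_{\text{CNF}}$ and every term of $\psi_{\text{DNF}}$ has its variable set contained in some bag, because these variables form a clique in $\primal{\phi}$ and hence all live together in at least one $\chi(t)$; this lets me "charge" each clause/term to its forget node.

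Next I would set up the DP by recursion on the quantifier rank. For rank~$\ell=1$, the table $\tau_t$ at a node~$t$ is simply a subset of $2^{\chi(t)}$: exactly those partial assignments on the bag that extend to a satisfying assignment of the portion of $\matr(\phi)$ already processed (namely, those clauses/terms whose variables have all been introduced and at least one forgotten). This has size $2^{|\chi(t)|}\le 2^{k+1}$, and introduce/forget/join transitions are well-known set operations performable in $2^{\mathcal{O}(k)}\cdot\poly(n)$. For rank~$\ell>1$, I would nest the table: a row at level $\ell$ consists of an assignment to the outermost-block variables inside $\chi(t)$ together with a level-$(\ell-1)$ table describing the inner sub-QBF's certificates under that assignment. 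Each added quantifier alternation wraps one more "set of" around the previous table type, so by induction on~$\ell$ the table size is bounded by $\tower(\ell,\mathcal{O}(k))$. At the root, acceptance is read off directly by checking whether the empty assignment is witnessed by the outermost quantifier of $\phi$.

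The main obstacle is stating and maintaining the invariants of these nested tables cleanly. At a forget node, clauses and terms involving the forgotten variable must be contracted correctly, which depends on the parity of the quantifier block that variable belongs to: for $\exists$-blocks we keep a row if \emph{some} extension of the forgotten variable survives, for $\forall$-blocks if \emph{all} extensions survive. At join nodes the two child tables must be combined by an intersection-like operation at the outermost level and recursively by the appropriate combinator at inner levels, again depending on the quantifier pattern. Once these transitions are specified, correctness is a double induction (on the TD bottom-up and on~$\ell$), and the runtime bound is immediate: the TD has $\mathcal{O}(|\var(\phi)|)$ nodes, each transition runs in time polynomial in the (at most $\tower(\ell,\mathcal{O}(k))$) table size, giving the claimed bound $\tower(\ell,\mathcal{O}(\tw{\primal{\phi}}))\cdot\poly(|\var(\phi)|)$. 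Full details can be found in~\cite{Chen04a}.
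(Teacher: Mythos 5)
The paper does not prove this statement itself; Proposition~\ref{prop:qbf} is imported verbatim from the cited work of Chen, and your nested dynamic-programming sketch (clique containment of clauses/terms in bags, level-$\ell$ tables of nested certificate sets of size $\tower(\ell,\mathcal{O}(k))$, existential versus universal handling at forget and join nodes, bottom-up correctness by double induction) is precisely the approach underlying that cited result. Your proposal is correct and essentially the same as the argument the paper relies on via the citation, so there is nothing further to compare.
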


Under \emph{exponential time hypothesis~(ETH)}~\cite{ImpagliazzoPaturiZane01},
this cannot be significantly improved. %

\begin{proposition}[\citey{FichteHecherPfandler20}]\label{qbf:lb}
Under ETH, for any QBF~$\varphi$ of quantifier rank~$\ell>0$, $\ell\hy\QSAT$ cannot be solved in time~$\tower(\ell, {o}(\tw{\primal{\varphi}}))\cdot\poly(\Card{\var(\varphi)})$.
\end{proposition}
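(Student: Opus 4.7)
My plan is to establish the bound by induction on the quantifier rank~$\ell$, following the canonical ``compression by binary encoding'' strategy used for ETH-based tower lower bounds. The base case $\ell=1$ reduces to the classical statement that, under ETH, $\BSAT$ on a CNF formula of primal treewidth~$k$ cannot be decided in time $2^{o(k)}\cdot\poly(n)$: since the primal treewidth of any $3$-CNF formula is bounded above by its number of variables, such an algorithm would, after sparsification, yield a $2^{o(n)}$-time procedure for $3$-$\BSAT$, contradicting ETH.

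For the inductive step, I would construct a polynomial-time many-one reduction that maps an arbitrary $\ell$-$\QSAT$ instance $\varphi$ with $n$ variables and primal treewidth~$k$ to a closed $(\ell+1)$-$\QSAT$ instance $\varphi'$ of $\poly(n)$ size and primal treewidth $k'=O(\log k)$. Starting from a width-$k$ tree decomposition $\TTT=(T,\chi)$ of $\primal{\varphi}$, the construction replaces each bag of size~$k$ by $O(\log k)$ ``pointer'' variables that binary-encode an index into the bag, together with a single additional quantifier block of fresh auxiliary variables whose role is to guess, bag by bag, a full assignment to the bag's original variables. The pointer variables then read off individual bits of the guessed assignment, while consistency between a child and its parent along an edge of~$T$ is enforced by constraints whose primal footprint is confined to $O(\log k)$ variables at a time, so that $\primal{\varphi'}$ inherits the tree structure of~$\TTT$ with logarithmically-sized bags and $\varphi'$ adds exactly one quantifier alternation over~$\varphi$.

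Plugging this reduction into a hypothetical $\tower(\ell+1, o(k'))\cdot\poly(n')$-time algorithm for $(\ell+1)$-$\QSAT$ and applying the identity $\tower(\ell+1,x)=\tower(\ell,2^x)$ yields a solver for $\ell$-$\QSAT$ running in time $\tower(\ell, 2^{o(\log k)})\cdot\poly(n) = \tower(\ell, o(k))\cdot\poly(n)$, contradicting the inductive hypothesis and closing the induction. The main obstacle I foresee is the careful design of the compression gadget so that (i) exactly one new quantifier alternation is introduced and its type matches the required parity at rank~$\ell+1$, (ii) semantic equivalence is preserved despite the indirection through pointer variables, and (iii) all new clauses are routed through bag-local $O(\log k)$-sized neighborhoods so that the promised logarithmic treewidth is realized; standard Tseytin-style auxiliary atoms keep $\matr(\varphi')$ in the required CNF$\wedge$DNF format without disturbing the width analysis.
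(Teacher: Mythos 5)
First, note that the paper does not prove this proposition at all: it is imported verbatim from the cited work of Fichte, Hecher and Pfandler (2020), so the only meaningful comparison is against that cited proof. Your overall architecture coincides with it: induction on the rank~$\ell$ with ETH for SAT as the base case (here $k\leq n$ already suffices; sparsification is not even needed), and an inductive step via a polynomial-time self-reduction that trades one additional quantifier alternation for an exponential drop in treewidth, from $k$ to $O(\log k)$, after which the arithmetic $\tower(\ell+1,o(\log k))=\tower(\ell,2^{o(\log k)})=\tower(\ell,o(k))$ yields the contradiction with the inductive hypothesis. That skeleton, including the asymptotic bookkeeping and the direction of the reduction, is exactly right.

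The genuine gap is in the compression gadget, which is where essentially all of the cited paper's technical work lies. As described, your single fresh block ``guesses, bag by bag, a full assignment to the bag's original variables''; but those original variables are distributed over the blocks $Q_1X_1,\dots,Q_\ell X_\ell$, and their values cannot be deferred to one new block of a single quantifier type at a single position without changing the meaning of the alternation --- and tying each fresh copy back to its original by equivalences re-creates precisely the width-$k$ interactions in the matrix that the compression is supposed to eliminate, since the original matrix still has to be evaluated somewhere. Moreover, a parent and child bag share up to $\Theta(k)$ variables, and checking that two bag assignments agree on this intersection with only $O(\log k)$ primal footprint is only possible by quantifying over bit positions/indices; hence the one extra block you are allowed must act as the index/verification block, and then it cannot simultaneously be the block that guesses the assignments, nor can the pointer variables simply ``read off bits'' as bag-local free variables without yet another alternation. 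Reconciling these two roles within a single added alternation, with the correct parity relative to $Q_\ell$ and the matrix kept in the required CNF/DNF shape, is the actual content of the Fichte--Hecher--Pfandler construction; your sketch names this obstacle but does not resolve it, so the inductive step is not established as written.
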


\paragraph{TDs for AFs.}
Consider for AF~$F=(A,R)$ the \emph{primal graph}~$\primal{F}$, where we
simply drop the direction of every edge,~i.e., $\primal{F}=(A,R')$ where
$R' \eqdef \{ \{u,v\} \,|\, (u,v) \in R\}$.
For any TD~$\mathcal{T}=(T,\chi)$ of~$\primal{F}$ and any node~$t$ of~$T$, we let~$A_t\eqdef A\cap \chi(t)$ be the \emph{bag arguments of~$t$} and $R_t\eqdef R \cap \{(a,b) \mid a,b\in\chi(t)\}$ be the \emph{bag attacks of~$t$}.

\newcommand{\RC}[0]{\text{RC}\xspace}

\section{Rejection Augmented AFs (\RAF{}s)}\label{sec:CCFs}
Next, we provide a definition for the syntax and semantics of
rejection conditions in abstract argumenta\-tion frameworks.

\begin{definition}\label{def:syntax:CC}
  A \emph{rejection augmented AF (\RAF)} is a
  triple~$\CF = (A,R,C)$ where 
  (1.) $(A,R)$ is an AF, and 
  (2.) $C\colon A \rightarrow \Pi$ is a labeling, called 
  \emph{rejection conditions (\RC{}s)}, that maps every argument in
  the framework to a constraint $\mathcal{P}\in\Pi$ 
  where $\Pi$ is a set of ASP programs.
\end{definition}

For brevity, we lift, for a set~$E\subseteq A$ of arguments, the
\RC{}s of~$E$ to~$C(E)\eqdef \bigcup_{e\in E}C(e)$.

We say that~$C$ is $\props$, if $\Pi$ is a set of CNF formulas instead
of programs.
We say $C$ is $\simple$ if $\var(C(A))=A$ and
$C$ is propositional.
If~$C(A)$ is a tight or normal program,
$C$ is called $\tight$ or $\normal$, respectively; otherwise
$C$ is $\disjs$.

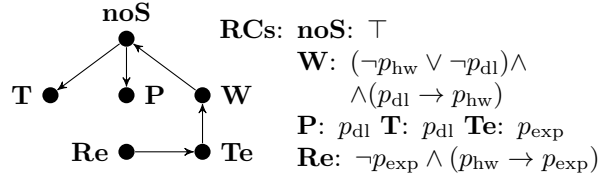
\begin{figure}
  \centering
  \begin{tikzpicture}[arg/.style={circle,fill=black,inner sep=.75mm},y=.75cm]
    \path[use as bounding box] (-1.6,-2.4) rectangle (6.2,.9);
    \node[arg, label={\bfseries noS}] (C) at (0,0) {};
    \node[arg, label={180:\bfseries T}] (T) at (-1,-1) {};
    \node[arg, label={0:\bfseries P}] (E) at (0,-1) {};
    \node[arg, label={0:\bfseries W}] (V) at (1,-1) {};
    \node[arg, label={0:\bfseries Te}] (S) at (1,-2) {};
    \node[arg, label={180:\bfseries Re}] (Re) at (0,-2) {};

    \foreach \f/\t in {C/T,C/E,V/C,Re/S,S/V}{
      \path[-stealth'] (\f) edge (\t);
    }
    
    \node[anchor=north west,align=left,text width=5cm] at (1.1,0.5) {\textbf{\RC{}s}: 
      $\brm{noS}$: $\top$\\
      \phantom{\textbf{\RC{}s}: }$\brm{W}$: $(\lnot p_{\text{hw}}\lor \lnot p_{\text{dl}})\land$\\
      \phantom{\textbf{\RC{}s}: }\phantom{$\brm{W}$: }$\land(p_{\text{dl}}\to p_{\text{hw}})$\\
      \phantom{\textbf{\RC{}s}: }$\brm{P}$: $p_{\text{dl}}$ $\brm{T}$: $p_{\text{dl}}$ $\brm{Te}$: $p_{\text{exp}}$\\
      \phantom{\textbf{\RC{}s}: }$\brm{Re}$: $\lnot p_{\text{exp}}\land(p_{\text{hw}}\to p_{\text{exp}})$
    };
  \end{tikzpicture}
  \caption{\RAF modeling an excerpt of every day research.}\label{fig:CCF-complex}
\end{figure}

\paragraph{Semantics.} 
For an argumentation semantics, we naturally extend the notion to an extension for \RAF{}s $\CF=(A,R,C)$.
Intuitively, the mapping~$C$ gives rise to a program/formula that specifies the \RC{}s for each argument.
Ultimately, \RC{}s of an extension must be invalidated. 

\begin{definition} \label{def:semantics}
Let $\CF{=}(A,R,C)$ be a \RAF and~$\sem\in\ALL$. %
Then, $E\subseteq A$ is a \emph{$\sem$-extension (of~$\CF$)} if $E\in\ext{\sigma}(A,R)$ %
and~$C(E) \cup E \cup \bigcup_{a\in A\setminus E}\{\bot\leftarrow a\}$ %
 is inconsistent.
\end{definition}
A counterclaim is a formula that prohibits the choice of an argument unless it is false, meaning that the counterclaim is not true. 
Note that arguments with $\top$ counterclaims could also be accepted, as long as the extension's combined counterclaims vanish (are invalidated). 

\begin{example}\label{ex:CCF-complex} 
	Consider the AF from Example~\ref{ex:AF}, which we slightly extend.
  \pwr{}riting a paper is
  attacked by \tea{}aching, which is attacked by doing
  \re{}search.
  Now, we include \dl{}eadlines into the scenario, but accept arguments
  $\{\pwr,\tr,\pr\}$ regardless of deadlines.
  Intuitively, if one is not hard-working or if there is no deadline,
  one does not write a paper.
  So each of these two ``counters'' the argument itself.  Yet, the
  deadline will make us hard-working.
  Being hard-working enables experiments which are necessary in
  empirical research.
  We add a \emph{rejection condition} (\RC{}s) to each argument (see Fig.~\ref{fig:CCF-complex}).
  The \RC{}s for \tr{} and \pr{} are a single proposition
  $p_{dl}$ modeling the RC that there is deadline.
  For \pwr, we take $\lnot p_{dl}$ modeling that one would not write a paper if there are no \dl{}eadlines.
  Now, choosing $\{\re, \pwr,\tr,\pr\}$ yields a stable extension as
  $p_{dl}\land\lnot p_{dl}$ is false.
  Consequently, $C(E)=\allowbreak(\lnot p_{\text{hw}}\lor \lnot
  p_{\text{dl}})\land(p_{\text{dl}}\to p_{\text{hw}})\land
  p_{\text{dl}}\land \lnot p_{\text{exp}}\land(p_{\text{hw}}\to
  p_{\text{exp}})\equiv \bot$. 
  Notice that adding arguments for (no) deadlines will result in loosing the extension $\{\pwr, \tr, \pr\}$.
  \qedexample
\end{example}

Interestingly, the empty extension
($\emptyset$) is no longer a valid extension for \RAF{}s.  
This follows from the fact that (empty) \RC{}s yield
$C(E)=\emptyset$ which produces a consistent $E$.
This is in line with our intuition, as
$C(E)$ admits the empty counterclaim.
Note that the arguments may be used as parts of the programs
in~$C$.  
If an argument $a$ has unconditional \RC{}s then $C(a) = \emptyset
\equiv \top$.  Again, this is consistent with our intuition that
$a$ does not affect $C(E)$.

\paragraph{Using ASP Programs.}
The use of ASP for modeling \RC{}s enables non-monotonicity and
advances its natural modeling. So, it may well be that a candidate~$E$
admits \RC{}s, but a superset of~$E$ does~not, which we demonstrate
below.

\begin{example}\label{ex:asp}
  Figure~\ref{fig:asp} provides an \RAF~$\CF=(A,R,C)$ where $\bot$ is modeled via $p\land\lnot p$. 
  Let $R$ be the attack relation. Then, 
  $\ext{\adm}(A,R)=\{\emptyset$, $\{a\}$, $\{b\}$, $\{d\}$, $\{a,b\}$, $\{a,d\}\}$.
  Only~$\{a,b\}$ and~$\{d\}$ are admissible extensions of~$\CF$, since the others admit non-empty \RC{}s. 
  Due to non-monotonicity of ASP, $\{a,d\}$ is not an admissible extension of~$\CF$. %
\end{example}
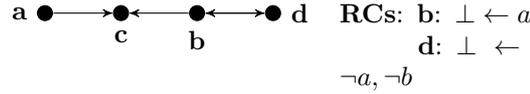
\begin{figure}
  \centering
  \begin{tikzpicture}[arg/.style={circle,fill=black,inner sep=.75mm}]
    \node[arg, label={270:\bfseries c}] (c) at (0,0) {};
    \node[arg, label={180:\bfseries a}] (a) at (-1,0) {};
    \node[arg, label={270:\bfseries b}] (b) at (1,0) {};
    \node[arg, label={0:\bfseries d}] (d) at (2,0) {};

    \foreach \f/\t in {a/c,b/c, b/d,d/b}{
      \path[-stealth'] (\f) edge (\t);
    }
    
    \node[anchor=north west,align=left,text width=3.3cm] at (2.75,0.25) {\textbf{\RC{}s}:
      $\brm{b}$: $\bot \leftarrow a$\\
      \phantom{\textbf{\RC{}s:} }$\brm{d}$: $\bot \leftarrow \neg a, \neg b$\\
    };
  \end{tikzpicture}\\[-1em]
  \caption{ASP enables non-monotonicity for \RC{}s.}\label{fig:asp}
\end{figure}

Now, if $C({a})\cup \{\bot \leftarrow a\}\}$ is inconsistent for each
$a\in A$, the \RAF-semantics collapses to the AF-semantics and
extensions coincide (modulo the empty extension). This 
motivates more detailed considerations for the following section.

\subsection{Simulating Frameworks}
\label{sec:expressiveness}

To understand \RAF{}s and their expressivity, we %
simulate AFs~\cite{Dung95a} and CAFs~\cite{Coste-MarquisDevredMarquis06a}.
Therefore, we require a notation with only reasonable
overhead.

\begin{definition}
  Let $\mathcal F\in \{\text{AF, CAF}\}$ be an argumentation
  formalism.
  Then, we say that \RAF{}s \emph{simulate} $\mathcal F$, in symbols
  \RAF{}s $\succeq \mathcal{F}$, if for every
  semantics~$\sigma\in \ALL$ and for every framework
  $F\in \mathcal F $, there is an \RAF~$F'$ such that
  $\ext{\sigma}(F)\setminus\{\emptyset\} =\ext{\sigma}(F')$ and~$F'$
  can be constructed in polynomial time.
  We write $\succ$ if the relation is strict.
\end{definition}

Clearly, \RAF{}s $\succeq$ AFs, e.g., by setting the \RC{}s to false.	
\begin{observation}\label{af:CCF}
	\RAF{}s $\succeq$  AFs. 
\end{observation}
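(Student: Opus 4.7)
The plan is to give a direct polynomial-time reduction that leaves the attack graph unchanged and only decorates each argument with a trivially inconsistent rejection condition. Concretely, given an AF $F = (A, R)$, I set $F' := (A, R, C)$ where $C(a) := \{\bot \leftarrow a\}$ for every $a \in A$. The construction is linear in $|A|$ and independent of the semantics $\sigma \in \ALL$, so it suffices to check that $\ext{\sigma}(F) \setminus \{\emptyset\} = \ext{\sigma}(F')$.

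For the forward inclusion I would take any nonempty $E \in \ext{\sigma}(F)$; since $F'$ has the same attack graph, the first clause $E \in \ext{\sigma}(A,R)$ of Definition~\ref{def:semantics} already holds. Picking any $e \in E$, the ASP fact $e$ contributed by $E$ together with the constraint $\bot \leftarrow e$ from $C(e)$ make the program $C(E) \cup E \cup \bigcup_{a \in A \setminus E}\{\bot \leftarrow a\}$ inconsistent, so $E \in \ext{\sigma}(F')$. For the converse, any $E \in \ext{\sigma}(F')$ satisfies $E \in \ext{\sigma}(A,R) = \ext{\sigma}(F)$ by the first clause of the same definition, and $E = \emptyset$ is impossible because then $C(E) = \emptyset$ and the remaining program $\bigcup_{a \in A}\{\bot \leftarrow a\}$ admits the empty answer set, i.e., is consistent, violating the second clause.

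There is essentially no technical obstacle; the observation hinges on the fact that Definition~\ref{def:semantics} already excludes the empty extension by design, which accounts for the $\setminus\{\emptyset\}$ on the left-hand side. The only judgment call is choosing a rejection condition that is inconsistent whenever any argument is accepted but vacuously satisfied for the empty candidate, and the uniform single-atom constraint $\bot \leftarrow a$ accomplishes this across every $\sigma \in \ALL$ without any semantics-specific case analysis.
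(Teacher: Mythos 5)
Your proof is correct and takes essentially the same approach as the paper, which simply decorates the unchanged attack graph with trivially inconsistent rejection conditions (the paper sets each $C(a)$ to ``false'', while you use the constraint $\bot\leftarrow a$, which becomes inconsistent exactly when $a$ is accepted and thus added as a fact in Definition~\ref{def:semantics}). Your explicit checks that the empty extension is excluded and that both inclusions hold for every $\sigma\in\ALL$ are precisely the reasoning the paper leaves implicit behind its one-line remark.
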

Later, we will see in Thm.~\ref{thm:hard} that unless
$\NP = \SigmaP{2}$, \RAF{}s $\succ$ AFs.
Moreover, for an \RAF $G= (A,R,C)$ and AF $F=(A,R)$, it is easy to
observe that
$\ext{\sigma}(G) \subseteq \ext{\sigma}(F)\setminus\{\emptyset\}$ for
each $\sigma\in \ALL$.  The converse does not hold (see
Example~\ref{ex:asp}).  By definition, the following relation between
semantics %
manifests for \RAF{}s. %

\begin{observation}\label{sem:relation}
	Given an \RAF $G$, then
	$\ext{\stab(G)} \subseteq \ext{\semi(G)} \subseteq$ $\ext{\pref(G)}\subseteq \ext{\comp(G)} \subseteq \ext{\adm(G)}$
	and
	$\ext{\stab(G)} \subseteq \ext{\stag(G)} \subseteq \ext{\conf(G)}.$
	
\end{observation}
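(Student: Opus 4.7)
\textbf{Proof plan for Observation~\ref{sem:relation}.}
The plan is to reduce the claim to the well-known containment chain for classical AFs by exploiting a structural property of Definition~\ref{def:semantics}: for a fixed set $E\subseteq A$, the condition that $C(E) \cup E \cup \bigcup_{a\in A\setminus E}\{\bot\leftarrow a\}$ be inconsistent is a predicate on $E$ alone, independent of the semantics $\sigma$. Hence every $\sigma$-extension of $G=(A,R,C)$ is simply a $\sigma$-extension of the underlying AF $(A,R)$ that additionally passes this (semantics-free) rejection test.

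First, I recall the classical containments on the AF $(A,R)$: by Dung's results and the standard definitions, $\ext{\stab}(A,R) \subseteq \ext{\semi}(A,R) \subseteq \ext{\pref}(A,R) \subseteq \ext{\comp}(A,R) \subseteq \ext{\adm}(A,R)$, together with $\ext{\stab}(A,R) \subseteq \ext{\stag}(A,R) \subseteq \ext{\conf}(A,R)$. These are immediate from the definitions given in the Preliminaries: a stable set is admissible with every outside argument attacked, hence semi-stable; semi-stable sets are preferred (subset-maximal admissible) with maximal range; and preferred, complete, admissible, conflict-free form a nested chain in the usual way. The stage chain is analogous, using that every stable extension has $S^+_R=A$, which is maximal among conflict-free ranges.

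Second, I lift these chains to $G$. Suppose $E\in \ext{\sigma_1}(G)$ and $\ext{\sigma_1}(A,R)\subseteq \ext{\sigma_2}(A,R)$ for two semantics in the chain. By Definition~\ref{def:semantics}, $E\in \ext{\sigma_1}(A,R)$ and the rejection test is satisfied by~$E$. The containment gives $E\in \ext{\sigma_2}(A,R)$, and the same rejection test--being a property only of~$E$, $C$, and~$A$--is still satisfied. Applying Definition~\ref{def:semantics} in the opposite direction then yields $E\in \ext{\sigma_2}(G)$. Iterating this step along each link of the two chains produces the two displayed inclusions.

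I do not anticipate any obstacle: the only nontrivial ingredient is the classical AF containment, which is standard, and the lift is immediate because the rejection clause factors out of the semantic choice. One small point worth stating explicitly in the write-up is that the empty extension, which is excluded for RAFs because $C(\emptyset)=\emptyset$ is consistent, does not threaten the inclusions here; the containments are between sets that already omit $\emptyset$, so no exceptional case appears.
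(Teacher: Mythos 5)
Your proposal is correct and matches the paper's intent: the paper states this observation without proof ("by definition"), and the intended justification is exactly your argument — the rejection test in Definition~\ref{def:semantics} depends only on $E$, $C$, and $A$, not on $\sigma$, so the classical Dung-style containment chains for $(A,R)$ lift verbatim to the \RAF{}. Your explicit note that the exclusion of the empty extension is uniform across semantics and hence harmless is a sensible addition but not a departure from the paper's reasoning.
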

\paragraph{Comparison to CAFs.} 
We briefly recall CAFs~\cite{Coste-MarquisDevredMarquis06a}.  A CAF is
a triple~$\CAF \dfn (A,R,\phi)$ where $(A,R)$ is an AF and $\phi$ is a
propositional formula over $A$.  For $E \subseteq A$, the
\emph{completion} of $E$ is
$\hat E = E \cup \{\neg a \mid a \in A \setminus E\}$.  Let
$E\subseteq A$, then $E$ is $\CAF$-admissible if $E$ is admissible for
$(A,R)$ and $\hat E\models \phi$, $\CAF$-preferred if $E$ is maximal
for set-inclusion among the $\CAF$-admissible sets, and 
$\CAF$-stable if $E$ is conflict-free, $\hat E\models \phi$ and
attacks all arguments in $A\setminus E$.  The remaining semantics for
CAFs are similarly defined.
Intuitively, \RAF{}s employ \RC{}s for each
argument %
and consider a larger set of variables, instead CAFs employ one
constraint for the whole AF. %
We establish that \RAF{}s strictly simulate CAFs.
\begin{lemma}\label{caf:CCF}
	\RAF{}s $\succeq$ CAFs.
\end{lemma}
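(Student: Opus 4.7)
The plan is to encode the single global CAF constraint $\phi$ as an identical rejection condition attached to every argument, so that for any non-empty candidate extension $E\subseteq A$ the \RC{} $C(E)$ exactly enforces the CAF completion requirement $\hat E\models\phi$.

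Concretely, given $\CAF=(A,R,\phi)$, I first convert $\neg\phi$ to a disjunctive ASP program $\mathcal{P}_{\neg\phi}$ over $A\cup Y$, where $Y$ is a fresh set of auxiliary atoms, using a standard Tseitin translation: I guess the atoms of $Y$ via choice/disjunctive rules and turn each CNF clause obtained from $\neg\phi$ into an ASP constraint. The property I need is that for every $E\subseteq A$, the program $\mathcal{P}_{\neg\phi}\cup E\cup\{\bot\leftarrow a\mid a\in A\setminus E\}$ admits an answer set iff $\hat E\models\neg\phi$. I then take the \RAF{} $G\eqdef(A,R,C)$ with $C(a)\eqdef\mathcal{P}_{\neg\phi}$ for every $a\in A$; this is polynomial in $\Card{\CAF}$.

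To prove $\ext{\sigma}(\CAF)\setminus\{\emptyset\}=\ext{\sigma}(G)$ for each $\sigma\in\ALL$, I fix a non-empty $E\subseteq A$ and observe that both frameworks require $E\in\ext{\sigma}(A,R)$ for the underlying AF-semantics (explicit in the paper for CAF-admissibility and CAF-stability, and the natural ``AF-$\sigma$ plus $\hat E\models\phi$'' reading of ``similarly defined'' for the remaining semantics in $\ALL$). Since $E$ is non-empty, $C(E)=\mathcal{P}_{\neg\phi}$; together with the facts $E$ and the constraints $\{\bot\leftarrow a\mid a\in A\setminus E\}$ this pins every atom in $A$ to its truth value under $\hat E$, so by construction the combined program is consistent iff $\hat E\models\neg\phi$, i.e., it is inconsistent iff $\hat E\models\phi$. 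This is precisely the additional condition separating $\ext{\sigma}(\CAF)$ from $\ext{\sigma}(A,R)$, so the two sets of non-empty extensions coincide. For $E=\emptyset$, $C(\emptyset)=\emptyset$ and the combined program reduces to $\{\bot\leftarrow a\mid a\in A\}$, which is consistent (the empty set is an answer set); hence $\emptyset\notin\ext{\sigma}(G)$ for every $\sigma$, matching the ``$\setminus\{\emptyset\}$'' on the CAF side.

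The main technical point to watch is the uniform treatment of the six semantics in $\ALL$: because Def.~\ref{def:semantics} first demands membership in $\ext{\sigma}(A,R)$ and only then adds the \RC{}-inconsistency condition, the simulation lines up with CAFs exactly when the CAF versions of $\semi$ and $\stag$ are interpreted in this ``AF-$\sigma$ plus $\hat E\models\phi$'' fashion; the Tseitin encoding of $\neg\phi$ and the pinning of the $A$-atoms by $E$ and the constraints are otherwise routine.
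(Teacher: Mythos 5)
Your construction (attach a program equivalent to $\neg\phi$ to every argument) is exactly the paper's construction for $\adm$, $\comp$, and $\stab$, and for those semantics your argument is fine. The gap is in how you dispose of the remaining semantics in $\ALL$, namely $\semi$ and $\stag$. You read ``similarly defined'' as ``AF-$\sigma$ plus $\hat E\models\phi$'', but the paper's explicit definition of CAF-preferred (``maximal for set-inclusion among the CAF-admissible sets'') shows that maximality in CAFs is taken \emph{relative to the constrained sets}; analogously, a CAF-semi-stable (CAF-stage) set is range-maximal among the CAF-admissible (conflict-free) sets whose completion satisfies $\phi$, not an AF-semi-stable (AF-stage) set that additionally satisfies $\phi$. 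Under that definition your simulation fails, because Definition~\ref{def:semantics} forces an \RAF{} $\semi$-extension to lie in $\ext{\semi}(A,R)$, i.e., to be range-maximal among \emph{all} admissible sets. Concretely, take $A=\{a,b,c\}$, $R=\{(a,b),(b,a),(b,c),(c,c)\}$ and $\phi=\neg b$. The CAF-admissible sets are $\emptyset$ and $\{a\}$, so $\{a\}$ is CAF-semi-stable (and CAF-stage); but the only AF-semi-stable (AF-stage) set is $\{b\}$, whose completion satisfies $\neg\phi$, so your \RAF{} has no $\semi$- or $\stag$-extension at all, while the CAF has the non-empty extension $\{a\}$. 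Since $\semi,\stag\in\ALL$, the lemma as stated is not established by your argument.

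This is exactly the part where the paper's proof does something genuinely different: for the maximality-based semantics it does not keep the same semantics on the \RAF{} side, but works with admissible (respectively conflict-free) sets and pushes the maximality test into the rejection condition itself, setting $C(a)=\neg\phi\lor\psi$, where $\psi$ (over fresh variables $a',a'',d_a,d_a'$) encodes the existence of a constrained counterexample $D$ (CAF-admissible, resp.\ conflict-free, with $\hat D\models\phi$) satisfying $E^+_R\subsetneq D^+_R$; inconsistency of $C(E)\cup E\cup\{\neg a\mid a\in A\setminus E\}$ then says precisely ``$\hat E\models\phi$ and no such $D$ exists''. Your proof is missing this step (or an equivalent mechanism). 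The Tseitin/ASP detour for $\neg\phi$ is harmless but unnecessary, since \RC{}s are allowed to be propositional CNFs directly; also note that the empty-extension discussion you give matches the paper and is not where the difficulty lies.
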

\begin{proof}
  Let $(A,R,\phi)$ be a CAF.  
  For $\sigma \in \{\adm,\stab,\comp\}$, we set $C(a)= \neg \phi$ for all $a\in A$ s.t.\ 
  the corresponding $\sigma$-extensions coincide. 
  For $\sigma \in \{\pref,\semi,\stag\}$, %
  employ admissible/conflict-free AFs and encode %
  additional properties into \RC{}s. %
  
  For preferred semantics, we let $C(a)= \neg \phi \lor \psi_\pref$ for each $a\in A$ where $\psi_\pref$ is a formula over $A$ and further uses fresh variables $\{a',a'', \mid a\in A\}$. %
  Intuitively, $\psi_\pref$ encodes the existence of a CAF-admissible extension $D$ as a counter example to a CAF-preferred set $E$. 
  That is, (1) ``$D$ is admissible'', (2)  ``$\hat D \models \phi$'' and (3) ``$D\supsetneq E$''. %
  \begin{enumerate}
  	\item $\bigwedge_{(a,b)\in R}(\neg a'\lor \neg b')\land \bigwedge_{(b,a)\in R}( \bigvee_{(c,b\in R)}(c'\lor \neg a'))$ 
  	\item $(\bigwedge_{a\in A}(a' \leftrightarrow a'') \land \phi_{[a\mapsto a'']}$
  	\item $\bigwedge_{a\in A}(a\rightarrow a') \land (\bigvee_{a\in A} (a'\land \neg a))$
  \end{enumerate}
  Consequently, the preferred extensions in CAF are exactly the admissible sets in CCF such that $C(E)\cup E \cup \{\neg a \mid a\in A\setminus E\}$ is inconsistent.
  $E$ is preferred in CAF iff $E$ is admissible, $\hat E\models \phi$ and maximal among admissible CAF-extensions %
  iff E is CCF-admissible. %
  The inconsistency of $E \land (\bigwedge_{a\in A\setminus E} \neg a) \land \neg \phi$ ensures that $E$ is CAF-admissible, whereas, the inconsistency of $E \land (\bigwedge_{a\in A\setminus E} \neg a) \land \psi_\pref$ ensures that $E$ has no CAF-admissible strict superset $D$.
  
  For $\semi$, we similarly let $C(a)= \neg \phi \lor \psi_{\semi}$ for each $a\in A$ where $\psi_\semi$ is a formula over $A$ and further uses fresh variables $\{a',a'', d_a,d_a' \mid a\in A\}$.
  Intuitively, $\psi_\semi$ encodes the existence of a CAF-admissible set $D$ as a counter example to a $\semi$-extension $E$.
  That is, (1) D is admissible, (2) $\hat D\models \phi$ and (3) $D^+_R \supsetneq E^+_R$.
  \begin{enumerate}
  	\item[1.+2.] same as before.
  	\item[3a.] $(\bigwedge_{a\in A} d_a \leftrightarrow \bigvee_{(b,a)\in R} b) \land (\bigwedge_{a\in A'} d'_a \leftrightarrow \bigvee_{(b,a)\in R} b')$
  	\item[3b.] $\bigwedge_{a\in A}(a\rightarrow (a' \lor d'_a) \land \bigwedge_{a\in A}(d_a\rightarrow (a' \lor d'_a)$
  	\item[3c.] $\bigvee_{a\in A} ((a'\lor d_a')\land (\neg a\land \neg d_a'))$
  \end{enumerate}
	Formula~(3a.) defines which arguments are defended ($d_a, d_a'$) and (3b.+3c.) state that $E^+_R\subsetneq D^+_R$. %
	As before, semi-stable extensions in CAF are the admissible sets in CCF for which the \RC{}s are invalidated.
	
	Finally, for $\stag$, we reuse $\psi_\semi$ and weaken the first condition (Formula 1.) so that $D$ is only conflict-free.
\end{proof}
Again, by a complexity argument later, in
Theorem~\ref{thm:hard}, we obtain that \RAF{}s $\succ$ CAFs assuming
that $\NP \neq \SigmaP{2}$.

\paragraph{\RAF{}s for Extensions In-Between Semantics.}
We argue that RCs are expressive enough to model certain AF-semantics as additional constraints.
We fix some notation first. 
For semantics $\sigma_1,\sigma_2$, we say that $\sigma_1$ is \emph{stricter} than $\sigma_2$ (and $\sigma_2$ is \emph{laxer} than $\sigma_1$) if $\ext{\sigma_1}(F)\subseteq \ext{\sigma_2}(F)$ for every AF $F$. For example, $\adm$ is laxer then $\pref$ or $\stab$, and stricter than $\conf$.
The idea is as follows: consider an AF $F$ where an extension for a laxer semantics exists whereas, an extension for a stricter semantics does not exist on $F$.
However, we are still interested in getting ``close'' to the stricter semantics and approximate an extension on a subset of $F$, as discussed in Example~\ref{ex:intro-hybrid}.

\begin{example}\label{ex:intro-hybrid}
  Consider the AF $F$ as depicted in Figure~\ref{fig:intro-hybrid}, which
  has no stable extension.
  Still, $\{a\}$  and $\{b\}$ are stable when restricting the AF to $\{a,b,c\}$. \qedexample
\end{example}

In the following, we formalize this intuition and prove that we can use \RC{}s to model such requirements.

\begin{definition}[Twofold Extension]
	Let $\sigma_1$ and $\sigma_2$ be two semantics.	
	Furthermore, let $F=(A,R)$ be an AF and call $S\subseteq A$ a \emph{shrinking} of~$A$. 
	We say that $E$ is a \emph{twofold} $(\sigma_1,\sigma_2)$-extension of $(F,S)$ if $E$ is a $\sigma_1$-extension in $F$ and $E\cap S $ is a $\sigma_2$-extension in the induced sub-framework $F[S]$,~i.e., $F[S]=(S,R')$ where $ R'=\{\;(u,v)\mid (u,v)\in R, u\in S,v\in S\;\}$.
\end{definition}

\begin{example}
	Consider AF~$F$ from Example~\ref{ex:intro-hybrid}.
	Then, $\{c\}$ is a twofold ($\conf,\pref$)-extension for the shrinking $\{c,d,e\}$ of $A$. 
	Moreover, $\{a\}$ and $\{b\}$ are twofold ($\adm,\stab$)-extensions for the shrinking $\{a,b,c\}$ of $A$. \qedexample
\end{example}

Next, we illustrate that the additional criterion imposed by the \emph{second} semantics on a shrinking can be simulated (for stable extensions) via simple \RC{}s. %

\begin{observation}\label{thm:hybrid}
  Let $F=(A,R)$ be an AF, $S\subseteq A$ be a shrinking of $A$ and
  $\sigma\in \ALL$ be a semantics.  Then, there is an \RAF $\CF$ such
  that the $\sigma$-extensions in $\CF$ are exactly the twofold $(\sigma,\stab)$-extensions of $(F,S)$.
\end{observation}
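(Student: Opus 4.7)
The plan is to reuse the underlying AF of $F$ and encode the stability-on-$S$ requirement entirely into the rejection conditions. Set $\CF := (A,R,C)$ with $C(a)$ equal to the same polynomial-size ASP program $\mathcal{P}$ for every $a \in A$, where $\mathcal{P}$ is designed so that, under the assignment that makes each argument in $E$ true and each argument in $A\setminus E$ false, $\mathcal{P}$ has an answer set if and only if the formula
\[
  \phi \;:=\; \bigvee_{s\in S}\Bigl(\neg s \;\land\; \bigwedge_{b\in S,\,(b,s)\in R}\neg b\Bigr)
\]
evaluates to true. Concretely, $\mathcal{P}$ introduces a fresh atom $t_s$ for each $s\in S$ together with the rule $t_s \leftarrow \neg s, \neg b_1, \ldots, \neg b_k$ (where the $b_i$ enumerate the arguments in $S$ that attack $s$ in $R$), plus one covering constraint $\bot \leftarrow \neg t_{s_1},\dots,\neg t_{s_m}$ that fires exactly when all the $t_s$ are false. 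The construction is clearly polynomial.

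The rationale is that every $\sigma\in\ALL$ forces a $\sigma$-extension $E$ of $F$ to be conflict-free, so $E\cap S$ is automatically conflict-free in the induced sub-framework $F[S]$. Hence stability of $E\cap S$ in $F[S]$ reduces to the single existential condition that every $s\in S\setminus E$ has an attacker $b\in E\cap S$ via an edge of $R$ lying inside $S$, which is precisely the negation of $\phi$ under the induced $E$-assignment.

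The remaining equivalence is then a direct unwinding. For any non-empty $E\in\ext{\sigma}(A,R)$, the combined program $C(E) \cup E \cup \bigcup_{a\in A\setminus E}\{\bot\leftarrow a\}$ pins every argument atom to its value in $E$; by construction it has an answer set iff $\phi$ is true under that assignment, iff $E\cap S$ is \emph{not} stable in $F[S]$. Inconsistency of this program is thus equivalent to $E\cap S$ being stable in $F[S]$, which together with the built-in requirement $E\in\ext{\sigma}(A,R)$ yields exactly the twofold $(\sigma,\stab)$-extension condition. The only subtle point is the empty set, which is never a $\sigma$-extension of $\CF$ by definition of \RAF{}s; this fits the same ``modulo $\emptyset$'' convention already adopted in the paper's simulation definition and is the only real obstacle to a verbatim reading of the statement.
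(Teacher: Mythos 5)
Your construction is correct: attaching to every argument a program whose answer sets (under the pinned assignment given by the facts $E$ and the constraints $\bot\leftarrow a$ for $a\in A\setminus E$) witness exactly the \emph{non}-stability of $E\cap S$ in $F[S]$ makes the combined program inconsistent precisely when $E\cap S$ is stable, and the unwinding you give is the same one the paper performs. The skeleton is identical to the paper's proof -- keep $(A,R)$, encode the stability test on the shrinking into the RCs, then argue both directions of the equivalence -- but the concrete encoding differs: the paper attaches propositional formulas only to arguments $a\in S$, namely $C(a)= a'\land \bigwedge_{(a,b)\in R} b' \land \bigvee_{x\in S}\neg x'$ over fresh primed copies, so that the \emph{union} of the RCs of $E$ becomes unsatisfiable exactly when every $x\in S$ is in $E$ or attacked from $E\cap S$; you instead use one uniform ASP program with witness atoms $t_s$, default negation over the argument atoms themselves, and a covering constraint. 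The paper's version buys a slightly stronger statement, in line with the sentence preceding the observation: plain propositional (indeed near-simple) RCs already suffice, whereas you spend the expressiveness of ASP (your program is tight, and your formula $\phi$ could in fact be used directly as a propositional RC, since the argument atoms are already pinned by the semantics); your version buys uniformity and avoids the primed copies. Your remark about the empty extension is fine and not a defect of your proof in particular -- the paper's construction exhibits exactly the same degenerate-case discrepancy (relevant only when $\emptyset$ is itself a twofold extension, i.e., when $S=\emptyset$), and the paper silently ignores it.
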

\begin{proof} %
	Let $A'\dfn \{\;a'\mid a\in A\;\}$ denote a (fresh) set of propositions.
	Then, we let $\CF\dfn(A,R,C)$ be the \RAF with \RC{}s~$C(a)\dfn a'\land \bigwedge_{(a,b)\in R} b' \land \bigvee_{x\in S}\neg x'$ for $a\in S$, and $C(a)\dfn \top $ otherwise. 
	Then, $E$ is a twofold ($\sigma,\stab$)-extension of $(F,S)$ if and only if $E$ is a $\sigma$-extension in $\CF$ and $C(E)\cup E \cup \bigcup_{a\in A\setminus E}\{\neg a\}$ is inconsistent.
	
	``$\Rightarrow$'': If $E\cap S$ is a $\stab$-extension in $F[S]$, then for each $x\in S$, either $x\in E$ or there is $y\in E\cap S$ for some $y$ such that $(y,x)\in R$. 
	This implies that $\bigwedge_{e\in E\cap S} (e' \land \bigwedge_{(e,b)\in R} b')\land \bigvee_{x\in S}\neg x' $ is inconsistent. 
	Consequently $C(E)\cup E \cup \bigcup_{a\in A\setminus E}\{\neg a\} \equiv \bot$.
	
	 ``$\Leftarrow$'': If E is a $\sigma$-extension in $\CF$, then $C(E)\cup E \cup \bigcup_{a\in A\setminus E}\{\neg a\} \equiv \bot$. 
	 But this is equivalent to the expression $\bigwedge_{e\in E\cap S} (e'\land \bigwedge_{(e,b)\in R} b') \land \bigvee_{x\in S}\neg x' \equiv \bot$ (since $A\setminus E$ contains negative literals and the only positive literals are formed over sets disjoint from $A\setminus E$). 
	As a result, $E\cap S$ is stable in $F[S]$.
\end{proof}

\begin{figure}
  \centering
  \begin{tikzpicture}[arg/.style={circle,fill=black,inner sep=.75mm,y=5em}]
    \node[arg, label={\bfseries c}] (c) at (-1,-.5) {};
    \node[arg, label={180:\bfseries a}] (a) at (-2,-1) {};
    \node[arg, label={0:\bfseries b}] (b) at (0,-1) {};

    \node[arg, label={0:\bfseries e}] (e) at (2,-1) {};
    \node[arg, label={0:\bfseries d}] (d) at (2,-0.5) {};

    \foreach \f/\t in {a/b,b/a,a/c,b/c, e/d}{
      \path[-stealth'] (\f) edge (\t);
    }
    \path[-stealth'] (e) edge[loop left,>=stealth'] (e);

  \end{tikzpicture}
  \caption{``Sub-framework'' of an AF with stable extension.}\label{fig:intro-hybrid}
\end{figure}
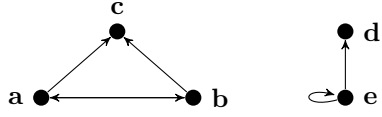

\section{How Hard is Rejection?}\label{sec:complexity}
In this section, we study the hardness of rejection.
Interestingly, already the use of auxiliary variables
makes the problem significantly harder.

\begin{theorem}\label{thm:hard}
The problem $\cons_{\conf}$ is $\Sigma_i^p$-complete,
for simple ($i=1$), propositional/tight ($i=2$), and disjunctive ($i=3$) \RC{}s.
\end{theorem}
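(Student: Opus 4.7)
The plan is to establish $\Sigma_i^p$-hardness for each case $i\in\{1,2,3\}$ by a reduction from $i$-\QSAT; membership is provided by Theorem~\ref{thm:membership}. A common template works: given a closed QBF $\phi = \exists X \forall Y \exists Z\ldots \psi$ of rank $i$, construct an \RAF $\CF=(A,R,C)$ in which the outer $\exists X$ is simulated by the conflict-free guess of $E\subseteq A$, while the remainder is encoded into the \RC{}s so that inconsistency of $C(E) \cup E \cup \bigcup_{a\notin E}\{\bot\leftarrow a\}$ captures exactly that remainder. Since $\emptyset$ is never a $\conf$-extension of an \RAF, a mild padding trick (for instance, introducing a fresh marker argument $z$ that the construction forces into every candidate $E$) takes care of the all-false assignment on $X$.

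For the simple case ($i=1$), take $A = X \cup \{z\}$ with $R=\emptyset$ (so every non-empty subset is conflict-free), and put $C(a) = \neg(\psi \wedge z)$ for every $a \in A$, written as a CNF over $A$. Then $\var(C(A))=A$ as required by \Simple, and for non-empty $E$ the combined rejection check can be evaluated in polynomial time; it becomes inconsistent iff the assignment induced by $E$ satisfies $\psi$. This yields NP-hardness from \BSAT. For the propositional/tight case ($i=2$), reuse the same frame but introduce $Y$ as \emph{auxiliary} fresh propositional variables and set $C(a) = \neg \psi(X,Y,z)$ (padded appropriately). Now $\var(C(A))\supsetneq A$, the inconsistency test performs a coNP check over $Y$ for the fixed assignment on $X$, and the whole problem matches $2$-\QSAT, giving $\SigmaP{2}$-hardness. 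The \Tight variant is obtained by converting the CNF $\neg\psi'$ to a tight normal program via the standard clause-to-rule translation that preserves classical models.

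For the disjunctive case ($i=3$), reduce from $3$-\QSAT $\exists X \forall Y \exists Z. \psi(X,Y,Z)$. The crucial ingredient is the classical Eiter--Gottlob reduction that turns a $\Pi_2^p$ query $\forall Y \exists Z.\chi(Y,Z)$ into inconsistency of a disjunctive logic program. Apply it to $\chi \eqdef \psi(X,Y,Z)$ with $X$ left as a free parameter, and install the resulting program into $C(a)$ for every $a\in A=X\cup\{z\}$. The main obstacle is precisely this coupling step: one must verify that fixing the $X$-atoms via $E$ together with $\bigcup_{a\notin E}\{\bot\leftarrow a\}$ interacts correctly with the minimal-model semantics of the GL-reduct (and not merely with classical satisfaction), since the non-monotone behaviour of ASP (cf.\ Example~\ref{ex:asp}) can otherwise produce spurious extensions when $a\notin E$ occurs negatively in some rule body. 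Once the guess-and-check correctness is established, the reduction is polynomial and yields $\SigmaP{3}$-hardness, closing the last case.
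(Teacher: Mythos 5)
Your overall route coincides with the paper's: the conflict-free guess over the arguments simulates the outer existential block, and the three completeness levels come from the cost of the inconsistency check for the respective class of \RC{}s (polynomial, $\NP$, and $\SigmaP2$ evaluation), with your propositional/tight construction (negated matrix as the rejection condition, then the standard clause-to-constraint translation with choice rules) and your disjunctive construction (a saturation-style Eiter--Gottlob program parameterized by the $X$-arguments) matching the paper's reductions in spirit. The genuine gap is the simple case. Simple \RC{}s must be CNFs whose variables are \emph{exactly} the arguments, so no auxiliary variables are available; but $\neg(\psi\wedge z)$ for a CNF~$\psi$ is essentially a DNF, and in general it admits no polynomial-size CNF representation over the same variables (already $\psi=\bigwedge_i(x_i\vee y_i)$ forces exponentially many clauses), while a Tseitin-style rewriting is forbidden by $\var(C(A))=A$. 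So your reduction from \BSAT is not polynomial-time as stated, and the logical equivalence you argue (``inconsistent iff the induced assignment satisfies $\psi$'') never gets a legal encoding. Note also that with $R=\emptyset$ and only the formula attached, falsifying \emph{some} clause of $C(E)$ under the assignment induced by $E$ is an easy condition, so the hardness really has to be built in structurally. This missing idea is precisely what the paper supplies: it introduces additional \emph{clause arguments}~$v_c$ (so the needed auxiliary variables are themselves arguments, keeping the \RC{}s simple) and attaches small implications of the form $v_c\rightarrow\neg c$ to the variable arguments, besides using mutually attacking pairs $x,x'$ to encode truth values.

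Two smaller points. Membership for $\cons_{\conf}$ is not given by Theorem~\ref{thm:membership} (that statement concerns $\cons_\semi$ and $\cons_\stag$); in the paper it is argued inside the proof of Theorem~\ref{thm:hard} itself via exactly the guess-and-evaluate argument you have in mind, so this is only a citation slip. In the propositional/tight case you should reduce from \TQSAT with the matrix in DNF (as the paper does), so that the negated matrix really is a CNF, respectively turns into a tight program with rules $y\leftarrow\pnot y'$, $y'\leftarrow\pnot y$ plus constraints; since auxiliary variables are permitted there, this is easily repaired, unlike the simple case. Finally, the ``coupling'' worry you flag for the disjunctive case is real but manageable (arguments in $E$ enter as facts and the remaining arguments are forced false by the added constraints, so they act as fixed atoms in the GL reduct); as written, however, you acknowledge rather than discharge it, whereas it is the part that actually needs verification.
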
%

\begin{proof}[Proof (Idea)]
Membership: It suffices to nondeterministically guess a conflict-free set $E$  and then evaluate $C(E)\cup E$ directly.
The evaluation of $C(E)\cup E$ is done in polynomial time (case simple) that can be processed directly yielding $\NP$, in $\NP$-time (case propositional/tight) yielding $\SigmaP2$, and in $\SigmaP2$-time (case disjunctive) yielding membership in $\SigmaP3$.

Next we show hardness.
``$\props$'': %
Reduce from~$\TQSAT$, where for any~$Q=\exists X. \forall Y. \varphi(X,\allowbreak Y)$ with $\varphi$ being in 3-DNF; assume only terms over both~$X$ and~$Y$.
Then, we construct the following \RAF~$\CF\eqdef(X\cup X', R, C)$ (see Example~\ref{ex:prop-3dnf-ext}):
$R\eqdef\{(x,x'), (x', x)) \mid x\in X\}$, 
$C(v) \eqdef \{\neg c %
\mid c \in \varphi, %
v\in \var(c)\}$  for~$v=x$ with~$x\in X$, and~$v=x'$ with~$x'\in X'$. %
Clearly, $Q$ is valid iff~$\cons_{\conf}(\CF){\neq}\emptyset$. %

``$\simple$'': %
Reduce from~$\BSAT$, where we let~$\varphi$ be any CNF with~$X=\var(\varphi)$.
Assume for a variable~$x$ and a set~$X$ of variables that~$x'$ is the %
``copy'' %
of~$x$.
Construct the %
\RAF~$\CF\eqdef(X\cup X' \cup \{v_c\mid c\in \varphi\}, R, C)$:
$R\eqdef\{(x,x'), (x', x)) \mid x\in X\}$, 
$C(x) \eqdef \{v_c\rightarrow\neg c %
\mid c \in \varphi, %
x\in \var(c)\}$ for~$v=x$ with $x\in X$, and~$v=x'$ with~$x'\in X'$. %
Show that~$\varphi$ is satisfiable iff~$\CF$ has a conflict-free extension.

``$\tight$'':  %
Reduce from~$\TQSAT$, where for any~$Q=\exists X. \forall Y. \varphi(X,Y)$ with $\varphi$ in DNF, assume only terms over both~$X$ and~$Y$.
Then, construct the %
\RAF~$\CF\eqdef(X\cup X', R, C)$:
$R\eqdef\{(x,x'), (x', x)) \mid x\in X\}$, 
$C(a) \eqdef \{y \leftarrow \neg y', y'\leftarrow\neg y, \leftarrow B \mid c \in \varphi, B=\bigcup_{\neg x\in c} x' \cup \bigcup_{y\in c\cap\var(c)} y, a\in \var(B), y\in \var(c)\cap Y\}$ for~$a\in X\cup X'$.
Thus~$Q$ is valid iff~$\CF$ has a conflict-free extension. %

``$\disjs$'': %
We reduce from~$3\hy\QSAT$, where for any~$Q=\exists X. \forall Y. \exists Z. \varphi(X,\allowbreak Y,Z)$ with $\varphi$ in CNF, we assume only terms over~$X$ and~$Y$.
We let %
\RAF $\CF\eqdef(X\cup X', R, C)$:
$R\eqdef\{(x,x'), (x', x)) \mid x\in X\}$, 
$C(a) \eqdef \{y\leftarrow \neg y', y'\leftarrow \neg y, z\vee z'  \leftarrow, z \leftarrow s, z' \leftarrow s, \leftarrow \neg s, s \leftarrow B \mid  c \in \varphi, B=\bigcup_{\neg x\in c} x \cup \bigcup_{y\in c\cap\var(c)} y', a\in \var(B), z\in\var(c)\cap Z, y\in\var(c)\cap Y\}$ for~$a\in X\cup X'$.
$Q$ is valid iff~$\cons_{\conf}(\CF){\neq}\emptyset$. %
\end{proof}

Consequently, hardness follows for semantics with conditions stronger than conflict-freeness.

\begin{corollary}\label{cor:cred}
Problems $\cons_\sigma$ and $\cred_\sigma$ for $\sigma\in\ALL^*$ are $\Sigma_i^p$-complete,
for simple ($i=1$), propositio\-nal/tight ($i=2$), and disjunctive ($i=3$) \RC{}s.
\end{corollary}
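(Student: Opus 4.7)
The plan is to derive both $\cons_\sigma$ and $\cred_\sigma$ for all $\sigma\in\ALL^*$ from Theorem~\ref{thm:hard}, by pairing a uniform membership argument with a standard reduction that lifts extension existence to credulous acceptance.

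For membership, given a candidate $E\subseteq A$, checking ``$E\in\ext{\sigma}(A,R)$'' is polynomial for every $\sigma\in\ALL^*$, so the only nontrivial work lies in the RC inconsistency test for $C(E)\cup E\cup\bigcup_{a\in A\setminus E}\{\bot\leftarrow a\}$. Under the \simple\ restriction this test is polynomial (direct formula evaluation), under \props\ / \tight\ RCs it is in $\mathrm{coNP}$ (propositional UNSAT, respectively non-existence of an answer set of a tight program via the justification characterisation given in the preliminaries), and under $\disjs$ RCs it is in $\Pi_2^p$ (complement of answer-set existence). Guessing $E$ (and, for $\cred_\sigma$, additionally demanding $c\in E$) thus yields the $\NP$, $\SigmaP2$, and $\SigmaP3$ upper bounds.

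For hardness of $\cons_\sigma$ with $\sigma\in\{\adm,\comp,\stab\}$, I would reuse the \RAF s $\CF$ from Theorem~\ref{thm:hard} verbatim. In each of the four constructions the non-isolated arguments occur only in symmetric attack pairs $\{x,x'\}$ with $R=\{(x,x'),(x',x)\mid x\in X\}$, so every ``full-choice'' conflict-free set picking exactly one member from each pair, together with the few isolated arguments (e.g.\ the $v_c$'s in the \simple\ case), is already stable in the underlying AF and hence also complete and admissible. In particular, the witness produced in the ``yes''-direction of the $\cons_\conf$ reduction is a stable extension of $\CF$. Conversely, any $\sigma$-extension with $\sigma\in\{\adm,\comp,\stab\}$ is conflict-free, so the $\cons_\conf$ analysis of Theorem~\ref{thm:hard} already excludes all non-witnessing candidates. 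Therefore the same $\RAF$s witness $\Sigma_i^p$-hardness of $\cons_\sigma$ for every $\sigma\in\ALL^*$.

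For hardness of $\cred_\sigma$ I would use the standard dummy-argument reduction: given an instance $\CF=(A,R,C)$ of $\cons_\sigma$, form $\CF'\eqdef(A\cup\{c\},R,C')$ with a fresh isolated argument $c$, $C'(c)\eqdef\top$, and $C'(a)\eqdef C(a)$ for $a\in A$. Since $c$ has no attackers, $c$ lies in every complete and every stable extension of $(A\cup\{c\},R)$ and can always be added to any conflict-free or admissible one; moreover $C'(c)=\top$ and the fact that $c$ does not appear in the RCs of $A$ leaves the RC inconsistency check invariant under the inclusion or exclusion of $c$. Hence $c$ is credulously $\sigma$-accepted in $\CF'$ iff $\CF$ has a $\sigma$-extension, transferring the lower bounds from $\cons_\sigma$ to $\cred_\sigma$. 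The main obstacle is the careful per-case bookkeeping in the third paragraph: one has to verify, for each of the four constructions of Theorem~\ref{thm:hard}, that forcing the isolated arguments into the extension (as completeness and stability require) is still compatible with the RC-inconsistency analysis, so that ``$\CF$ has a $\conf$-extension'' and ``$\CF$ has a $\stab$-extension'' genuinely coincide on the constructed instances.
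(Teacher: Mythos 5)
Your proposal takes essentially the same route as the paper's (very brief) proof: membership by guessing an extension candidate and verifying it in polynomial-, $\NP$-, or $\SigmaP{2}$-time exactly as in Theorem~\ref{thm:hard}, and hardness inherited from the reductions of Theorem~\ref{thm:hard}, which the paper dismisses with ``hardness vacuously carries over.'' Your additional bookkeeping---observing that the full-choice witnesses in the symmetric-attack-pair constructions are already stable (hence admissible and complete, modulo forcing in the unattacked auxiliary arguments), and adding a fresh unattacked argument with empty rejection condition to lift $\cons_\sigma$ to $\cred_\sigma$---is exactly the routine verification the paper leaves implicit, and it is sound.
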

\begin{proof}[Proof (Idea)]
Membership holds similarly to the theorem, as for the semantics it suffices to guess an extension candidate and verify it in polynomial-, $\NP$-, or $\SigmaP2$-time, respectively.
Hardness vacuously carries over from the theorem.
\end{proof}

Next, we show a proof example and establish membership.

\begin{example}\label{ex:prop-3dnf-ext}
	Consider the valid 3-DNF $\exists x_1\exists x_2\forall y(x_1\land\lnot x_2\land y)\lor(x_1\land\lnot x_2\land\lnot y)$. 
	Let $d\eqdef(\lnot x_1\lor x_2\lor\lnot y)\land(\lnot x_1\lor x_2\lor y)$. 
	Following Theorem~\ref{thm:hard} ($\props$), we get $C(x_1){=}C(x_2){=}C(x_1'){=}C(x_2'){=}d$. 
	Hence, $E\eqdef\{x_1,x_2'\}$ is a conflict-free extension of~$F$ since $C(E)\cup \{x_1,x_2', \neg x_1', \neg x_2\}%
	\equiv y\land\lnot y\equiv \bot$ is unsatisfiable. \qedexample
\end{example}

\begin{theorem}\label{thm:membership}
The problems~$\cons_\stag$ and~$\cons_\semi$ are in $\Sigma_i^p$, for simple ($i=1$), propositional/tight ($i=2$), or disjunctive ($i=3$) \RC{}s.
\end{theorem}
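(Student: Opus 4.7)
The plan is a guess-and-verify membership argument for both $\cons_\semi$ and $\cons_\stag$. Given an \RAF $\CF=(A,R,C)$, I would nondeterministically guess a candidate $E\subseteq A$ and verify: (1) $E$ is admissible (for $\semi$) or conflict-free (for $\stag$) in $(A,R)$; (2) $C(E)\cup E\cup\bigcup_{a\in A\setminus E}\{\bot\leftarrow a\}$ is inconsistent; and (3) no admissible (resp.\ conflict-free) $S'\subseteq A$ in $(A,R)$ has $E^+_R\subsetneq (S')^+_R$. Check (1) is polynomial, and check (3) is in $\mathrm{coNP}$ (its negation guesses a counterexample $S'$ and verifies admissibility/conflict-freeness and the range inclusion in polynomial time). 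The cost of check (2) depends on the class of \RC{}s: polynomial for simple (because $\var(C(A))=A$, so the assignment induced by $E$ fully determines the truth value of $C(E)$); $\mathrm{coNP}$ for propositional/tight (unsatisfiability of a CNF, respectively of the Clark completion of a tight program); and $\Pi_2^p$ for disjunctive (complement of the $\Sigma_2^p$-complete ASP consistency problem, cf.~\cite{EiterGottlob95}).

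For propositional/tight \RC{}s, checks (2) and (3) both contribute $\mathrm{coNP}$-level constraints that fuse into one universal layer inside the outer $\exists E$, giving membership in $\Sigma_2^p$. For disjunctive \RC{}s, the $\Pi_2^p$ cost of (2) dominates and absorbs the $\mathrm{coNP}$ cost of (3), yielding membership in $\Sigma_3^p$. These two cases are routine once the right oracle decomposition is set up and follow the same template used for $\cons_\sigma$, $\sigma\in\ALL^*$, in Corollary~\ref{cor:cred}.

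The main obstacle is the simple case, where a naive combination yields only $\Sigma_2^p$ but the target is $\NP$. Since check (2) is polynomial for simple \RC{}s, achieving $\NP$ membership requires eliminating the $\forall$-quantifier hidden in check (3). My plan is to exploit the structural restriction $\var(C(A))=A$: because \RC-invalidation is fully determined by the assignment induced by $E$, I would seek a polynomial-size certificate of range-maximality that can be guessed alongside $E$ and verified in polynomial time---for instance, by reducing the $\cons_\semi$ (resp.\ $\cons_\stag$) question for simple \RAF{}s to a $\cons_\stab$-style existence problem on an auxiliary framework in which stability itself already witnesses the required maximality, while the simple \RC{}s are encoded as additional attacks or arguments. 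Designing this reduction is the technically delicate step, and it is precisely what separates the simple case from the otherwise straightforward $\Sigma_2^p$ upper bound.
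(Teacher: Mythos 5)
Your decomposition is fine for two of the three claimed bounds, but it does not prove the theorem. For propositional/tight and disjunctive \RC{}s, guessing $E$ and then verifying admissibility/conflict-freeness, RC-invalidation, and range-maximality indeed costs only $\mathrm{coNP}$ resp.\ $\Pi_2^p$ after the existential guess, giving $\Sigma_2^p$ and $\Sigma_3^p$; this is, if anything, more direct than the paper, which never verifies maximality at all. The genuine gap is the simple case: you concede that your template only yields $\Sigma_2^p$ and you leave the decisive step --- a polynomial-size certificate of range-maximality, or a reduction to a $\cons_\stab$-style existence problem --- entirely unconstructed, and the direction you sketch is unlikely to close it. Checking that a given admissible set is semi-stable (or that a conflict-free set is stage) is already $\mathrm{coNP}$-complete for plain AFs, so an $\NP$-verifiable witness that the guessed $E$ itself is range-maximal would collapse $\mathrm{coNP}$ into $\NP$; the only plausible way to reach $\NP$ is to avoid certifying maximality of the guessed set altogether.

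That is exactly the idea the paper's proof rests on, uniformly for all three cases, and it is absent from your plan: for the \emph{existence} question the paper guesses merely an admissible (resp.\ conflict-free) set $E$ and checks that $C(E)\cup E\cup\{\bot\leftarrow a\mid a\in A\setminus E\}$ is inconsistent, arguing that if $E$ is not maximal, a larger set only adds arguments and hence \RC{}s, so some genuinely semi-stable/stage extension of $(A,R)$ is RC-invalidated as well. With the maximality check dropped, verification costs polynomial time / $\NP$ / $\Sigma_2^p$, yielding $\NP$ / $\Sigma_2^p$ / $\Sigma_3^p$ --- in particular the $\NP$ bound for simple \RC{}s that your approach cannot reach. (Whether that monotonicity step is itself fully watertight --- range-larger admissible sets need not be supersets of $E$, and enlarging $E$ both adds rules and removes constraints $\bot\leftarrow a$, which for ASP-based \RC{}s can in principle restore consistency --- is a question about the paper's argument; but it, or some substitute for it, is the missing ingredient, and without it your simple case remains stuck at $\Sigma_2^p$.)
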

\begin{proof}
  Given an \RAF~$\CF=(A,R,C)$, compute whether $\ext{\sigma}(\CF)\neq\emptyset$ for stage and semi-stable semantics. %
  For existence of such an extension (and not to directly compute it) it suffices to nondetermini\-sti\-cally guess an admissible set $E$ and then evaluate $C(E)\cup E$ directly. 
  If $E$ was not maximal as required (semi-stable/stage) and $E$ is an extension, then a larger set would falsify $C(E)\cup E$ only more as more arguments and accordingly \RC{}s would be added.
  The evaluation of $C(E)\cup E$ is done in polynomial time (case simple) that can be processed directly yielding $\NP$, in $\NP$-time (case propositional/tight) yielding $\SigmaP2$, and in $\SigmaP2$-time (case disjunctive) yielding $\SigmaP3$.
  \end{proof}

A higher complexity for credulous reasoning is shown for more involved semantics in the next result.

\begin{theorem}\label{thm:credsemistab}
The problems
$\cred_{\semi}$,~$\cred_{\stag}$ are $\Sigma_{i+1}^p$-hard, for simple ($i{=}1$), propositional/tight ($i{=}2$), or disj.\ ($i{=}3$)~\RC{}s.
\end{theorem}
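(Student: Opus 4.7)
The plan is to reduce from $(i+1)\hy\QSAT$ for each of the three RC classes, so that the additional quantifier alternation over what Theorem~\ref{thm:hard} already handles is supplied by the range-maximality of $\semi$ (resp.\ $\stag$). Concretely, view a valid $\Sigma_{i+1}$-QBF as $\phi=\exists X_0\forall X_1\exists X_2\cdots F(X_0,\ldots,X_{i+1})$. The outermost $\exists X_0$ is encoded by a choice gadget built from mutually-attacking pairs $(x,x'),(x',x)$ for $x\in X_0$, exactly as in the construction of Theorem~\ref{thm:hard}. What remains is a $\Pi_i^p$-statement in $X_0$, and the RCs will be used to express its negation under the complexity budget available ($\simple$ for $i{=}1$, $\props$/$\tight$ for $i{=}2$, $\disjs$ for $i{=}3$), reusing the gadgets from the proof of Theorem~\ref{thm:hard} essentially verbatim.

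To lift this from $\cons_\conf$-style hardness to $\cred_\semi$/$\cred_\stag$-hardness, I would introduce a distinguished query argument $q$ together with a small ``witness'' sub-framework whose admissible (resp.\ conflict-free) extensions correspond to assignments of the universal layer $X_1$. The construction is arranged so that, for a fixed choice of~$X_0$, an admissible set $E$ with $q\in E$ can be strictly extended in range $E^+_R$ by incorporating a witness-argument iff some assignment to~$X_1$ makes the inner formula $\exists X_2\cdots F$ unprovable via the RC machinery. Thus $E$ is semi-stable iff every assignment to~$X_1$ is ``blocked'' by the RCs, which is exactly the outer universal quantifier. Credulous acceptance of~$q$ then captures $\exists X_0$. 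For $\cred_\stag$, replace admissibility by conflict-freeness in the witness gadget; the rest of the argument carries over.

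The RC labels are defined as in Theorem~\ref{thm:hard}, but now over the combined variable set including the witness layer, so that $C(E)\cup E\cup\{\bot\leftarrow a\mid a\notin E\}$ is inconsistent precisely when the $\Sigma_i^p$-inner check succeeds for the chosen $X_0$ \emph{and} the current universal assignment. Because the RC class determines the worst-case complexity of evaluating $C(E)\cup E$, the reduction stays within $\simple$, $\props$/$\tight$, or $\disjs$ RCs, matching the three cases.

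The main obstacle will be the calibration of the witness gadget: it must enlarge the range $E^+_R$ exactly when the inner $\Pi_i$-check fails, and it must not create spurious semi-stable extensions that bypass the intended correspondence (e.g., extensions with trivially maximal range unrelated to~$X_0$). I expect to handle this by attaching the witness arguments through self-attacking buffers and bidirectional attacks to $q$, a standard trick in AF reductions, so that witnesses can only contribute to the range when the RC invalidation explicitly permits it. Once this alignment is in place, the equivalence ``$\phi$ is valid iff $q$ is credulously accepted under $\semi$ (resp.\ $\stag$)'' follows by combining the range-maximality analysis with the correctness of the Theorem~\ref{thm:hard} gadget for the inner~$i$ quantifiers.
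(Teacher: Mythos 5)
Your overall division of labor matches the paper's: the range-maximality of $\semi$/$\stag$ buys exactly one additional quantifier alternation on top of the machinery of Theorem~\ref{thm:hard}, whose gadgets handle the inner levels, and the outermost existential is the choice of extension. The paper implements this by adapting the concrete construction of Dvo\v{r}\'ak and Woltran~\cite{DvorakWoltran10} for plain AFs (the arguments $t,t',b$, self-attacking copies $\hat y,\hat y'$, clause arguments), shows that $t'$ lies in no $\semi$/$\stag$ extension iff a QBF of the form $\forall Y\exists Z(\forall X)\ldots$ is valid (i.e., it argues via the complement of credulous acceptance), and then raises the level by deleting the clause arguments and attaching the clause checks as \RC{}s to the single argument~$t$, exactly in the style of Theorem~\ref{thm:hard}.

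The gap is that the piece doing all the work --- your ``witness'' sub-framework whose range-larger admissible (resp.\ conflict-free) challengers correspond to assignments of the universal block, without creating spurious maximal-range extensions --- is never constructed; you flag its calibration as the main obstacle and only gesture at self-attacking buffers and bidirectional attacks. That gadget is precisely what the adapted Dvo\v{r}\'ak--Woltran construction supplies, so your plan is plausible but, as written, it is an outline rather than a proof. Moreover, the claim that the Theorem~\ref{thm:hard} gadgets can be reused ``essentially verbatim'' hides a polarity issue: in Theorem~\ref{thm:hard} the \emph{chosen} extension's \RC{}s must be \emph{inconsistent}, whereas here the extra universal level arises because every range-larger challenger must \emph{fail} to qualify, i.e., its \RC program must be \emph{consistent} (an existential check: $\NP$ for propositional/tight, $\Sigma_2^p$ for disjunctive \RC{}s), while the accepted extension itself must still meet the inconsistency requirement. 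Pinning down which \RC check sits at which quantifier level, and keeping the two requirements from interfering, is exactly what the paper's concrete construction (with the \RC{}s placed on $t$ and correctness argued through non-acceptance of $t'$) settles and what your proposal leaves open.
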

\begin{proof}
	``Lower Bounds'': Analogously as before, given a set of arguments $X$, denote by $\hat X\eqdef\{\hat x\mid x\in X\}$. 
	We adapt an reduction by Dvo{{\v r}{\'a}}k and Woltran~\cite{DvorakWoltran10} that shows credulous reasoning for AFs under semi-stable/stage semantics is $\SigmaP2$-complete.
	We reduce from a $\QSAT$-instance $\Phi=\forall Y.\exists Z.\bigwedge_{c\in \mathcal C}c$ to an \RAF $\CF\eqdef(A,R,C)$, where
        \begin{align*}
          A \eqdef\, &\{t,t', b\}\cup \mathcal C
                       \cup Y\cup Y'\cup \hat Y\cup\hat Y'\cup Z\cup Z',\\
          R \eqdef\, &\{(c,t)\mid c\in \mathcal C\}\cup  \{(\ell,c)\mid c\in \mathcal C, \ell\in c\}\cup \\
                     & \{(x,x'),(x',x)\mid x\in Y\cup Z\}\,\cup\\ 
                     &\{(y,\hat y),(y',\hat y'),(\hat y,\hat y),(\hat y',\hat y')\mid y\in Y\}\cup\\
                     & \{(t,t'),(t',t),(t,b),(b,b)\}.
	\end{align*} %
	Then $t'$ is contained in no semi-stable/stage extension iff~$\Phi$ is valid, thereby solving the complement of~$\cred_{\semi}$/$\cred_{\stag}$. 
	For simple \RC{}s, $C(a)\eqdef \emptyset$ $\forall a\in A$ showing $\SigmaP2$-hardness.
	
	For propositional \RC{}s, reduce from a $\QSAT$-instance $\Phi=\forall Y.\exists Z.\forall X.\bigvee_{c\in \mathcal C}c$ to an \RAF $\CF\eqdef(A,R,C)$, where $(A,R)$ is as above, but without the arguments~$\mathcal C$ and attacks involving~$\mathcal C$. 
	Then, define~$C(t)\eqdef \{\neg c \mid c\in\mathcal C\}$ and the complexity increases. 
	The construction works also for disj.\ \RC{}s (cf.\ Theorem~\ref{thm:hard}), rising the complexity to~$\Sigma_4^P$. %
	
	``Upper Bounds'': Given \RAF $\CF=(A,R,C)$, we are asked for an extension containing a specific argument $a\in A$.
	In essence, we need to follow the approach in the proof of Theorem~\ref{thm:membership}. 
	However, we need to know the elements in extension $E$ completely (opposed to before) and ensure $a\in E$.
	Thus, we must not only guess it, but also check respective maximality by a $\co\NP$-oracle call that increases the previous runtimes by one level.
\end{proof}

\subsection{Tight Complexity Bounds for Treewidth}\label{sec:fpt}
To avoid this high complexity, we consider treewidth, which has been used to efficiently solve argumenta\-tion~\cite{iccma}.
First, we define a graph representation.

\begin{definition}
Let~$\CF=(A,R,C)$ be an \RAF and let $(V_a, E_a)=\primal{C(a)}$ be given for every~$a\in A$. Then, the \emph{primal graph~$\primal{\CF}$} of~$\CF$
is defined by %
$\primal{\CF}\eqdef (\bigcup_{a\in A} V_a \cup A,$ $\bigcup_{a\in A} E_a \cup R \cup \{\{a,v\} \mid a\in A, c\in C(a), v\in\var(c)\}$. %
\end{definition}
Below, let $\CF=(A,R,C)$ be an \RAF and $\mathcal{T}{=}(T,\chi)$ be a TD of~$\primal{\CF}$. For any~$t$ of~$T$, $C_t\eqdef \{c\mid a\in A_t,\allowbreak c\in C(a), \var(c)\in\chi(t)\}$ denotes \emph{bag rejection conditions}.

\begin{example}\label{ex:graph-td}
Fig.~\ref{fig:graph-td} presents a graph representation and a TD for Ex.~\ref{ex:CCF-complex}.
Since the largest bag size is $4$, the treewidth of our TD is $3$. \qedexample
\end{example}

\begin{figure}
\centering
	\begin{tikzpicture}[arg/.style={circle,fill=black,inner sep=.75mm},y=.8cm]
		\node[arg, label={180:\bfseries T}] (T) at (-1,-1) {};
		\node[arg, label={\bfseries noS}] (C) at (0,0) {};
		\node[arg, label={0:\bfseries P}] (E) at (0,-1) {};
		\node[arg, label={0:\bfseries W}] (V) at (1,-1) {};
		\node[arg, label={0:\bfseries Te}] (S) at (1,-2.45) {};
		\node[arg, label={270:\bfseries Re}] (Re) at (0,-3.25) {};

		\node[arg, label={180: $p_{\text{dl}}$}] (restr) at (0,-1.75) {};
		\node[arg, label={180: $p_{\text{hw}}$}] (conv) at (0.,-2.45) {};
			\node[arg, label={280: $p_{\text{exp}}$}] (exp) at (1,-3.25) {};
				
		\foreach \f/\t in {conv/restr, conv/exp, T/restr, E/restr, S/exp, Re/exp, Re/conv, V/conv, V/restr, C/T,C/E,V/C,Re/S,S/V}{
			\path[-] (\f) edge (\t);
		}
\end{tikzpicture}
\hspace{0.5cm}
\begin{tikzpicture}[level distance= 2.5 em, sibling distance= 5.5 em,every node/.style={draw},
	edge from parent/.style={thin,-,black, draw},
	 leaf/.style = {draw, circle}]
\node  {\textcolor{black}{\bfseries noS, \bfseries P, $p_{d}$ }}
		child {node {\bfseries T, \bfseries noS, $p_{d}$}}
	child {node {\bfseries noS, \bfseries W, $p_{d}$} 
	child {node {\bfseries W, $p_{d},p_{h}$}		
	child {node {\bfseries W, $p_{h}$, \bfseries Te}
	child {node {\bfseries Te, $p_{h},p_e$, \bfseries Re}
}}}}
;
\end{tikzpicture}%
	\caption{Primal graph and a TD of Figure~\ref{fig:CCF-complex}.}\label{fig:graph-td}
      \end{figure}
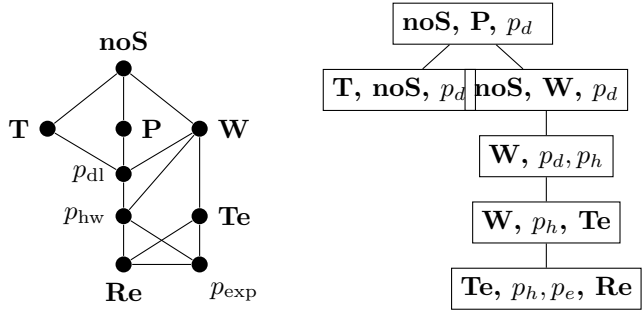
Next, we present runtime results in the form of upper bounds for the solution of our problems of interest when parameterized by treewidth.
To this end, we follow the idea of decomposition-guided (DG) reductions of related work~\cite{FichteEtAl21} to establish efficient algorithms for instances of bounded treewidth. 
Such DG reductions reduce polynomially from a source to a target, taking into account a TD~$\mathcal{T}$ of the source. 
By design, these reductions yield a TD of the target, suited for showing (tree)width dependencies.

\paragraph{Stable Semantics.} First, we compute stable extensions of~$(A,R)$ via a reduction to \BSAT.
To this end, we start with the following reduction, cf.~\cite{FichteEtAl21}.
This reduction uses a variable~$a$ for each argument~$a\in A$, which indicates whether~$a$ is in the extension or not.  %
To encode whether an argument is defeated up to a node~$t$, 
auxiliary variables of the form~$d^t_a$ for every node~$t$ of~$T$ and argument~$a\in A$ indicate whether~$a$ is attacked (``defeated'') by an argument~$b\in\chi(t)$ of the extension, resulting in
\emph{defeated variables}~$D\eqdef\{d^t_a\mid a\in A_t, t\text{ in }T\}$.

The DG reduction~$\mathcal{R}_{\Stab}(\CF,\mathcal{T})$ constructs a propositional formula: 
$	\mathcal{R}_{\Stab}(\CF,\mathcal{T}) \eqdef \exists A, D. \varphi_{\stab}(A,D)$, where $\varphi_{\stab}(A,D)$ is a CNF consisting of Formula~(\ref{stab:def})--(\ref{stab:remove}):
\begin{flalign}
	\label{stab:def}&d_a^t \rightarrow\bigvee_{\substack{t^*\in\children(t),a\in A_{t^*}}} d_a^{t^*}\, \vee \bigvee_{(b,a)\in R_t} b& \forall t\in T,a\in A_t\\
	\label{stab:conf}&\neg a \vee \neg b& \forall(a,b)\in R\\
	\label{stab:remove} & a \vee d_a^{\forg(a)}&  \forall a\in A
\end{flalign}
Intuitively, 
Formula~(\ref{stab:def}) define the defeated variables, (\ref{stab:conf}) ensure conflict-freeness and~(\ref{stab:remove})  take care that each argument is in the extension or defeated.

\paragraph{Simple \RC{}s.}
We take the DG reduction $\mathcal{R}_\stab$ and extend it by \RC{}s.
Since for an extension some clause needs to evaluate to false, we guide this evaluation along the TD~$\mathcal{T}=(T,\chi)$ of~$\primal{\CF}$.
We use auxiliary variables~$w^t$ for every node~$t$ of~$T$ and $w_c^t$ for~$t$ and every rejection condition~$c\in C_t$,
resulting in \emph{rejection witnesses}~$W\eqdef \{w^t, w_c^t \mid t\text{ of }T, c\in C_t\}$.
These auxiliary variables %
witness that some rejection condition in a node~$t$ is invalidated.
For simple \RC{}s, we define $\mathcal{R}_{\Stab,\simple}(\CF,\mathcal{T}) \eqdef 		\exists A, D, W. \varphi_{\stab}(A,D) \wedge \varphi_{\simple}(A,W)$, where
$\varphi_{\simple}(A,W)$ is in CNF consisting of Formula~(\ref{simpl:wit})--(\ref{simpl:remove2}).%
\begin{flalign}
  \label{simpl:wit}&w^t \rightarrow\bigvee_{\substack{t^*\in\children(t)}}  w^{t^*} \vee \bigvee_{c\in C_t} w_c^t& \forall t\in T \\
  \label{simpl:remove2} & w^{\rootOf({T})}\\
  \label{simpl:modl}&w_c^t \rightarrow \neg l& \forall t\in T, c\in C_t, l\in c\\%
  \label{simpl:moda}&w_c^t \rightarrow \bigvee_{\mathclap{a\in A_t: c\in C(a)}}a& \forall t\in T, c\in C_t
\end{flalign}

Formula~(\ref{simpl:wit}) take care that witnessing any invalidated rejection condition in a node~$t$ requires such a witness below~$t$ or some specific \RC~$c\in C_t$ for~$t$. Such a specific~$c$ requires at least some ``hosting'' argument~$a\in A_t$ to be in the extension, cf.\ Formula~(\ref{simpl:moda}), as well as corresponding truth values of literals, cf.\ Formula~(\ref{simpl:modl}).
Finally, we require that eventually~$w^{\rootOf({T})}$ holds, i.e., some \RC is invalidated as given by Formula~(\ref{simpl:remove2}).
With this reduction at hand, we establish:

\begin{theorem}\label{thm:simple}
Given any \RAF~$\CF=(A,R,C)$. Then, $\cons_{\stab}(\CF)$ for simple \RC{}s~$C$ can be solved in time~$\tower(1,\mathcal{O}(k))\cdot\poly(\Card{A})$, %
with treewidth~$k=\tw{\primal{\CF}}$. %
\end{theorem}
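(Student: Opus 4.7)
The plan is to invoke the decomposition-guided reduction $\mathcal{R}_{\Stab,\simple}(\CF,\mathcal{T})$ constructed above and conclude via Proposition~\ref{prop:qbf} with quantifier rank~$\ell=1$, which bounds the cost of SAT by $\tower(1,\mathcal{O}(\tw{\primal{\varphi}}))\cdot\poly$. It then suffices to show (i) correctness of the reduction, and (ii) that its output has primal treewidth $\mathcal{O}(k)$.

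First, I would fix a TD $\mathcal{T}=(T,\chi)$ of $\primal{\CF}$ of width~$k$, computable in FPT-linear time, and then verify that a total assignment~$\tau$ satisfies $\varphi_{\stab}(A,D)\wedge \varphi_{\simple}(A,W)$ iff $E\eqdef \{a\in A\mid \tau(a)=1\}$ is a stable extension of $(A,R)$ whose rejection conditions $C(E)$ are jointly falsified by the $E$-assignment. For the $\stab$-block this holds because~(\ref{stab:conf}) directly encodes conflict-freeness, clauses~(\ref{stab:def}) inductively pin each $d_a^t$ to witness an attacker of~$a$ lying at or below~$t$ in~$E$, and~(\ref{stab:remove}) at $\forg(a)$ forces every omitted argument to be defeated. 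For the $\simple$-block,~(\ref{simpl:remove2}) together with downward propagation via~(\ref{simpl:wit}) ensure that some $w_c^t$ is set to true; then~(\ref{simpl:moda}) forces a hosting argument~$a$ with $c\in C(a)$ into~$E$, and~(\ref{simpl:modl}) forces every literal of~$c$ to evaluate to false under the $E$-assignment, exactly certifying invalidation of an RC.

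Next, I would extend each bag $\chi(t)$ by the auxiliary variables $\{d_a^t\mid a\in A_t\}\cup\{w^t\}\cup\{w_c^t\mid c\in C_t\}$ and observe that every clause in~(\ref{stab:def})--(\ref{stab:remove}),~(\ref{simpl:modl}),~(\ref{simpl:moda}) has its variables contained in a single extended bag. The main obstacle is clause~(\ref{simpl:wit}), which simultaneously mentions $w^t$, the children's witnesses, and every $w_c^t$ with $c\in C_t$; since $|C_t|$ can be polynomial in~$\Card{A}$, placing them all in one bag would blow up the width. I would resolve this by replacing~(\ref{simpl:wit}) with an equisatisfiable linear chain of ternary implications via fresh variables $r_0^t,\ldots,r_{|C_t|}^t$, each of which interacts only with its predecessor, the corresponding $w_{c_i}^t$, and eventually $w^t$ or the children's witnesses. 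This chain can be threaded through~$t$ by introducing a linear sequence of intermediate TD nodes that add and forget one $r_i^t$ at a time, leaving the overall width at $\mathcal{O}(k)$.

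Finally, applying Proposition~\ref{prop:qbf} to the resulting SAT instance of primal treewidth $\mathcal{O}(k)$ and $\poly(\Card{A})$ variables yields the desired bound $\tower(1,\mathcal{O}(k))\cdot\poly(\Card{A})$.
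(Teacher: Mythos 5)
Your proposal is correct and follows essentially the same route as the paper: apply the decomposition-guided reduction $\mathcal{R}_{\Stab,\simple}(\CF,\mathcal{T})$, extend the bags by the variables $d_a^t, w^t, w_c^t$ to obtain a TD of the resulting formula of width $\mathcal{O}(k)$, and conclude via Proposition~\ref{prop:qbf} with quantifier rank~$1$. The only difference is cosmetic: where you split the long clause~(\ref{simpl:wit}) into a chain of short implications with fresh variables threaded through intermediate TD nodes, the paper equivalently assumes (without loss of generality, by adding copy nodes to~$\mathcal{T}$) that $\Card{C_t}=\width(\mathcal{T})$ for every node, bounding the extended width by $3\cdot\width(\mathcal{T})+1$; both devices handle the same potential blow-up from large~$C_t$.
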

\begin{proof}[Proof (Idea)]
Take \emph{any} TD~$\mathcal{T}{=}(T,\chi)$ of~$\primal{\CF}$ and, without loss of generality, we may assume that for every~$t$ of~$T$, we have~$\Card{C_t}=\width(\mathcal{T})$, which can be obtained by splitting large~$C_t$ into smaller ones with the help of adding copy nodes to~$\mathcal{T}$.
Then, we construct $\mathcal{T}'{\,\eqdef\,}(T,\chi')$ with~$\chi'(t){\,\eqdef\,}\{d_a^t, w_c^t, w^t, w^{t^*} \mid t^*\in\children(t),a\in A_t, \in C_t\} \cup\chi(t)$ for every~$t$ of~$T$. Indeed, $\mathcal{T}'$ is a TD of~$\primal{\mathcal{R}_{\stab,\simple}(\CF,\mathcal{T})}$ with $\width(\mathcal{T}') \leq 3\cdot\width(\mathcal{T})+1$. Then, Proposition~\ref{prop:qbf} yields the result.
\end{proof}

While the cases above focused on the stable semantics, one might as well encode the \emph{computation of further semantics}, while reusing our encodings for the different fragments of rejection constraints above. For the decision problems, this leads to the results as given in the last column of Table~\ref{tbl:overviewresults}. 
For credulous reasoning, we obtain similar results.

\paragraph{Propositional \RC{}s.}
Consider the case for propositional \RC{}s allowing the use of additional variables~$B$ that are not among the arguments of the AF.
This additional expressiveness allows us to encode the problem to~$\TQSAT$ without additional auxiliary variables.

We define $\mathcal{R}_{\Stab,\prop}(\CF,\mathcal{T}) \eqdef $
		$\exists A, D. \forall B. \varphi_{\stab}(A,D) \wedge \varphi_{\prop}(A,B)$, %
where $B\eqdef \var(C)\setminus A$ are variables of~$C$ not in~$A$ and $\varphi_{\prop}(A,B)$ is in DNF consisting of Formula~(\ref{stab:mod2}). %
\begin{flalign}
	\label{stab:mod2}a \wedge  \bigwedge_{l\in c}\neg l\quad & & \forall a\in A, c\in C(a)  %
\end{flalign}
Intuitively, Formula~(\ref{stab:mod2}) ensure that for some argument of the extension, its \RC is invalid.
Then, we obtain:

\begin{theorem}\label{thm:prop}
Given any \RAF~$\CF=(A,R,C)$. Then, $\cons_{\stab}(\CF)$ for propositional \RC{}s~$C$ can be solved in time~$\tower(2,\mathcal{O}(k))\cdot\poly(\Card{A})$, %
with~$k=\tw{\primal{\CF}}$. %
\end{theorem}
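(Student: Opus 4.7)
The plan is to apply the decomposition\hy guided reduction $\mathcal{R}_{\Stab,\prop}(\CF,\mathcal{T})$ defined above and invoke Proposition~\ref{prop:qbf} with $\ell=2$. Concretely, I would start from an arbitrary TD~$\mathcal{T}=(T,\chi)$ of~$\primal{\CF}$ of width~$k$, build the closed $2$-QBF $\exists A,D.\forall B.\varphi_{\stab}\wedge\varphi_{\prop}$, construct in parallel a TD of its primal graph of width~$\mathcal{O}(k)$, and read off the runtime.

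The first step is correctness. An assignment to $A$ selects a candidate extension $E\eqdef\{a\in A\mid a\text{ true}\}$, the auxiliary variables $D$ witness bottom\hy up along $\mathcal{T}$ which arguments are attacked by elements of~$E$, and $B=\var(C)\setminus A$ ranges over the remaining \RC\hy variables. The subformula $\varphi_{\stab}(A,D)$ encodes that $E$ is a stable extension of $(A,R)$: Formula~(\ref{stab:conf}) enforces conflict\hy freeness, Formula~(\ref{stab:def}) correctly propagates the defeat flag through~$\mathcal{T}$, and Formula~(\ref{stab:remove}) forces every non\hy selected argument to be defeated. The DNF $\varphi_{\prop}(A,B)$ holds exactly if some $a\in E$ admits a clause $c\in C(a)$ all of whose literals are false; hence $\forall B.\varphi_{\prop}$ expresses that, under the partial assignment induced by $E$ together with $\{\neg a\mid a\notin E\}$, every extension of the assignment to~$B$ falsifies some clause of~$C(E)$. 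This is precisely the inconsistency condition of Definition~\ref{def:semantics} specialized to propositional \RC{}s.

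The second step is to lift $\mathcal{T}$ to a TD $\mathcal{T}'=(T,\chi')$ of $\primal{\mathcal{R}_{\Stab,\prop}(\CF,\mathcal{T})}$ via
\[
  \chi'(t) \eqdef \chi(t)\cup\{d_a^t\mid a\in A_t\}\cup\{d_a^{t^*}\mid t^*\in\children(t),\,a\in A_t\cap A_{t^*}\}.
\]
Since $\Card{\children(t)}\leq 2$ by assumption, $\width(\mathcal{T}')\leq 4(k+1)-1\in\mathcal{O}(k)$. To verify the TD property I would check clause by clause that every clause of $\varphi_{\stab}\wedge\varphi_{\prop}$ lives in some bag: Formulas~(\ref{stab:def}) and~(\ref{stab:remove}) by construction of $\chi'$; Formula~(\ref{stab:conf}) because $(a,b)\in R$ is already an edge of $\primal{\CF}$; and Formula~(\ref{stab:mod2}) because $\{a\}\cup\var(c)$ forms a clique in $\primal{\CF}$, inherited from the edges of $\primal{C(a)}$ together with the edges $\{a,v\}$ for $v\in\var(c)$, and hence sits in some bag of $\mathcal{T}\subseteq\mathcal{T}'$. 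Connectedness for each fresh $d_a^t$ is immediate because it is introduced at exactly one node.

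Combining the two ingredients, the reduction produces a closed $2$-QBF of size $\poly(\Card{A})$ whose primal graph has treewidth $\mathcal{O}(k)$; Proposition~\ref{prop:qbf} then yields the bound $\tower(2,\mathcal{O}(k))\cdot\poly(\Card{A})$. The main obstacle I anticipate is the bookkeeping for Formula~(\ref{stab:mod2}) when a single $C(a)$ contains many clauses sharing the same variables: following the trick used in Theorem~\ref{thm:simple}, I would preprocess $\mathcal{T}$ by inserting copy nodes so that each clause $c\in C(a)$ can be ``charged'' to one bag already containing $\{a\}\cup\var(c)$, which only affects the hidden constant in the width bound and leaves the asymptotic complexity untouched.
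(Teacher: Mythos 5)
Your proposal matches the paper's proof: it applies the decomposition-guided reduction $\mathcal{R}_{\Stab,\prop}$ to a closed $2$-QBF, lifts the given TD of $\primal{\CF}$ to a TD of the reduction's primal graph of width $\mathcal{O}(k)$ (the paper adds only $\{d_a^t\mid a\in A_t\}$ per bag), and concludes via Proposition~\ref{prop:qbf}; you merely spell out the correctness and bag-coverage checks that the paper leaves as an idea sketch. Your final worry about copy nodes for Formula~(\ref{stab:mod2}) is unnecessary here, since no per-clause auxiliary variables are introduced and each term's variables already form a clique in $\primal{\CF}$, so it does not affect the argument.
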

\begin{proof}[Proof (Idea)]
Similar to the proof of Theorem~\ref{thm:simple}, we take \emph{any} TD~$\mathcal{T}{=}(T,\chi)$ of~$\primal{\CF}$
and construct $\mathcal{T}'{\,\eqdef\,}(T,\chi')$ with~$\chi'(t){\,\eqdef\,}\{d_a^t \mid a\in A_t\} \cup\chi(t)$ for every~$t$ of~$T$; Then, since~$\mathcal{T}'$ is a TD of~$\primal{\mathcal{R}_{\stab,\simple}(\CF,\mathcal{T})}$ and $\width(\mathcal{T}') \in\mathcal{O}(\width(\mathcal{T}))$, we conclude due to Proposition~\ref{prop:qbf}.
\end{proof}

\paragraph{Tight \RC{}s.}

For tight \RC{}s, additionally, we ensure justification of atoms.
In order to design a DG reduction,
we guide justifiability for an atom in a node~$t$ of~$T$ along the TD.
To this end, we rely on  %
\emph{justification variables}~$J\eqdef \{j_x^t, j_c^t\mid t\text{ of }T,x\in\chi(t)\setminus A, c\in C_t\}$.
These auxiliary variables indicate that a justification for an atom~$x$ or involving \RC~$c$ is found up to a node~$t$.
We define $\mathcal{R}_{\Stab,\tight}(\CF,\mathcal{T}) {\eqdef}
		\exists A, D. \forall B,J. \varphi_{\stab}(A,D) \wedge \varphi_{\tight}(A,B,J)$, where
$\varphi_{\tight}(A,B,J)$ is in DNF, which compri\-ses Formula~(\ref{stab:mod2})--(\ref{tight:remove3}).%
\begin{flalign}
  &j_x^{t} \wedge \bigwedge_{\substack{t^*\in\children(t),\\x\in \chi(t^*)}}\neg j_x^{t^*} \wedge\bigwedge_{c\in C_t, x\in H_c}\neg j_c^t \forall t\in T, x\in \chi(t)\setminus A_t\label{tight:prop}\\
&j_c^t \wedge l& \forall t\in T, c\in C_t, l\in B_c^- \cup \{\neg y\mid y\in B_c^+\}\;\label{stab:modasp}\\
\label{tight:def}&j_c^{t} \wedge\bigwedge_{a\in A_t: c\in C(a)}\neg a& \forall t\text{ of }T, c\in C_t\\
& x\wedge \neg j_x^{\forg(x)}&  \forall x\in \chi(t)\setminus A_t\;	\label{tight:remove3} %
\end{flalign}
Intuitively, Formula~(\ref{tight:prop}) ensure that if any of the universally quantified~$j_x^t$ claims to have found a justification for~$x$, this has to be indeed the case below~$t$ or involving an \RC~$c$. Such a justification involving an \RC is invalidated if any of the body atoms is not set accordingly, cf.~Formula~(\ref{stab:modasp}), or if it is not activated by any argument of the extension, cf.~Formula~(\ref{tight:def}). Finally, if none of those terms is satisfied, the actual goal is to find some atom that is not justified,
which is guaranteed by Formula~(\ref{tight:remove3}).

\begin{theorem}\label{thm:tight}
Given any \RAF~$\CF=(A,R,C)$. Then, $\cons_{\stab}(\CF)$ for tight \RC{}s~$C$ can be solved in time~$\tower(2,\mathcal{O}(k))\cdot\poly(\Card{A})$, %
with~$k=\tw{\primal{G}}$. %
\end{theorem}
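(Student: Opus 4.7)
The plan is to follow the decomposition-guided template already used for Theorems~\ref{thm:simple} and~\ref{thm:prop}: view the reduction $\mathcal{R}_{\Stab,\tight}(\CF,\mathcal{T})$ as a closed rank-$2$ QBF whose matrix is $\varphi_{\stab}(A,D)\wedge\varphi_{\tight}(A,B,J)$, establish (i)~that the QBF is equivalent to $\cons_{\stab}(\CF)$ for tight \RC{}s, and (ii)~that the primal treewidth of the QBF is $\mathcal{O}(\tw{\primal{\CF}})$. Proposition~\ref{prop:qbf} with $\ell=2$ then yields the claimed $\tower(2,\mathcal{O}(k))\cdot\poly(\Card{A})$ runtime.

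For correctness, the outer $\exists A, D$ picks a candidate extension $E$ together with defeated markers, and $\varphi_{\stab}(A,D)$ forces $E$ to be stable in $(A,R)$ exactly as in the earlier encodings. The inner $\forall B, J$ ranges over all assignments to the non-AF atoms of $C(E)$ (playing the role of a candidate answer set $M$) together with justification markers $J$. I would verify term by term that~(\ref{stab:mod2}) fires iff some rule in $C(E)$ is violated; that~(\ref{stab:modasp}) and~(\ref{tight:def}) rule out any $j_c^t$ set to true when either a body literal of $c$ fails or no argument hosting $c$ lies in $E$; that~(\ref{tight:prop}) rules out any $j_x^t$ lacking supporting evidence from a child marker or from a rule head at $t$; and that~(\ref{tight:remove3}) fires when an atom is true yet carries no justification marker at its forget node. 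Invoking the Lin--Zhao criterion that for tight programs an answer set is exactly a model in which every true atom is justified, this makes the inner $\forall$ valid precisely when $C(E)\cup E\cup\{\bot\leftarrow a\mid a\in A\setminus E\}$ admits no answer set, as required.

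For the treewidth bound, I would start from any TD $\mathcal{T}=(T,\chi)$ of $\primal{\CF}$, assume $\Card{C_t}\leq\width(\mathcal{T})$ after the usual splitting of large bags, and construct $\mathcal{T}'=(T,\chi')$ with
\[
\chi'(t)\eqdef\chi(t)\cup\{d_a^t,d_a^{t^*}\mid a\in A_t,\,t^*\in\children(t)\}\cup\{j_x^t,j_x^{t^*},j_c^t\mid x\in\chi(t)\setminus A_t,\,c\in C_t,\,t^*\in\children(t)\}.
\]
Since $\Card{\children(t)}\leq 2$, we have $\width(\mathcal{T}')\in\mathcal{O}(\width(\mathcal{T}))$. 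Verifying that $\mathcal{T}'$ is a valid TD of $\primal{\mathcal{R}_{\Stab,\tight}(\CF,\mathcal{T})}$ is routine: every clause/term touches only variables co-located in some $\chi'(t)$, and each auxiliary symbol is introduced at its node and forgotten at its parent, so the connectedness condition is preserved.

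The main obstacle I anticipate lies in the correctness step, specifically in taming the adversarially-quantified justification markers $J$: every dishonest $j_x^t$ or $j_c^t$ (true without an actual justification) must be refuted by one of the terms~(\ref{stab:modasp})--(\ref{tight:prop}), while genuinely unjustified atoms must still be exposed by~(\ref{tight:remove3}). Once this term-by-term bookkeeping is established and the treewidth of $\mathcal{T}'$ is bounded as above, Proposition~\ref{prop:qbf} closes the argument.
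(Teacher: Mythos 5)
Your proposal is correct and follows essentially the same route as the paper: the paper proves Theorem~\ref{thm:tight} implicitly by the DG reduction $\mathcal{R}_{\Stab,\tight}$ to a rank-$2$ QBF, a bag-augmentation argument identical in spirit to the proofs of Theorems~\ref{thm:simple} and~\ref{thm:prop} to bound the target treewidth linearly, and an application of Proposition~\ref{prop:qbf}, with correctness (via the Lin--Zhao justification criterion for tight programs) deferred to the same style of argument as Lemma~\ref{thm:corr-disj}. Your explicit treatment of the adversarial $J$-markers and the inclusion of the children's auxiliary variables in the bags is, if anything, slightly more careful than the paper's sketch.
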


Interestingly, the reduction can be extended even further to \emph{normal \RC{}s}, where there may be cyclic dependencies between atoms. Here one has to encode level mappings~\cite{Janhunen06} between atoms in a way that is treewidth-aware~\cite{\camera{Hecher20}HecherFandinno21}.

\paragraph{Disjunctive \RC{}s.}
For disjunctive \RC{}s, we have to care about subset-minimality with respect to the GL reduct. 
Thus, for a set of atoms to be an extension, (i) it must be an extension of the AF and (ii) the disjunctive \RC{}s must be invalidated, which is the case if either (ii.a) there is no model (cf.~$\varphi_{\prop}$) or (ii.b) there is a model of the GL reduct ($\varphi_{\textsf{red}}$) that is smaller ($\varphi_{\textsf{subs}}$).

Towards such a $3\hy\QSAT$ encoding, we require an auxiliary variable~$r$ (``reduct'') indicating that we aim for (ii.b) if~$r$  holds and for (ii.a) otherwise. Further, we need to guide the checking of subset-minimality along the TD by means of auxiliary variables~$S$, resulting in the \emph{subset-minimality variables}~$S\eqdef \{s^t, s_b^t\mid t\text{ of }T, b\in B\cap\chi(t)\}$.
We define
$\mathcal{R}_{\Stab,\disj}(\CF,\mathcal{T}) \eqdef \exists A, D. \forall B. \exists B',S,\{r\}.(\varphi_{\stab}(A,\allowbreak D) \wedge \varphi_{\textsf{red}}(B') \wedge \varphi_{\textsf{subs}}(B,B',S)) \wedge
	 (\varphi_{\prop}(A,B) \vee r)$, where formula $\varphi_{\textsf{red}}(B')$ is in CNF, comprising Formula~(\ref{red:mod2}):%
\begin{flalign}%
	&\neg r \vee\neg a \vee\bigvee_{x'\in H_c'}x' \vee \bigvee_{x'\in (B_c^+)'}\neg x' \vee \bigvee_{x\in B_c^-}x, %
        &\forall a\in A_t,c\in C_t\cap C(a)\label{red:mod2}
\end{flalign}
Formula~(\ref{red:mod2}) 
ensures that if~$r$ holds and an argument~$a$ is in the extension, then the GL reduct of its \RC{}s hold.
CNF $\varphi_{\textsf{subs}}(B,B',S)$ consists of Formula~(\ref{red:subset})--(\ref{red:removestricts}).
\begin{flalign}
	\label{red:subset} & \neg r \vee \neg b' \vee b& \forall b\in B\,\\
	\label{red:stricts}&\neg r \vee \neg {s}^t \vee\bigvee_{t^*\in\children(t)}{s}^{t^*} \vee \bigvee_{b\in B\cap\chi(t)}s_b^t& \forall t\in T\,\\
	&\neg {s}_b^t \vee b & \forall t\in T, \label{red:stricts2}b\in B\cap\chi(t)\, \\
	&\neg {s}_b^t \vee \neg b' & \forall t\in T,\label{red:stricts3}b\in B\cap\chi(t)\, \\
	\label{red:removestricts}&s^{\rootOf(T)}
\end{flalign}	
Intuitively, $\varphi_{\textsf{subs}}$ ensures that the assignment over~$B'$ is (strictly) subset-smaller than the one over~$B$. The non-strict subset-inclusion is modeled by Formula~(\ref{red:subset}). Then, Formula~(\ref{red:stricts}) take care that if strict inclusion is encountered up to~$t$, this stems from below~$t$ or for a specific~$b\in B$,
which is defined by Formula~(\ref{red:stricts2}) and~(\ref{red:stricts3}).
Finally, such a strict inclusion has to be encountered, cf.~Formula~(\ref{red:removestricts}).
Then, this reduction yields the following result.
\begin{theorem}\label{thm:disj-tw}
  Given any \RAF $\CF = (A, R, C)$.  Then, $\cons_{\stab}(\CF)$ for
  disj.\ \RC{}s can be solved in
  time~$\tower(3,
  {O}(\tw{\primal{G}}))\cdot\poly(\Card{A})$. %
\end{theorem}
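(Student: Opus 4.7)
The plan mirrors the decomposition-guided strategy used for Theorems~\ref{thm:simple}, \ref{thm:prop}, and~\ref{thm:tight}: take an arbitrary TD $\mathcal{T}=(T,\chi)$ of $\primal{\CF}$ of width $k$, apply the DG reduction $\mathcal{R}_{\Stab,\disj}(\CF,\mathcal{T})$ already constructed above, and then invoke Proposition~\ref{prop:qbf}. Since the produced formula has the shape $\exists A,D.\forall B.\exists B',S,\{r\}.\phi$, it is a $3\hy\QSAT$ instance, so Proposition~\ref{prop:qbf} will yield runtime $\tower(3,\mathcal{O}(k'))\cdot\poly(\Card{A})$ where $k'$ is the treewidth of the primal graph of the reduced formula; the goal is therefore to show $k'\in\mathcal{O}(k)$.

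First, I would pre-process $\mathcal{T}$, as in the proof of Theorem~\ref{thm:simple}, so that every bag contains at most a bounded number of \RC{}s, which is achieved by splitting large $C_t$ via copy nodes without increasing the width. Then I would construct $\mathcal{T}'=(T,\chi')$ from $\mathcal{T}$ by extending each bag $\chi(t)$ with exactly the fresh auxiliary variables that ``live'' at $t$: the defeated variables $d_a^t$ for $a\in A_t$; the primed copies $b'$ for $b\in B\cap\chi(t)$; the subset-minimality markers $s^t$ and $s_b^t$ for $b\in B\cap\chi(t)$; references $s^{t^*}$ for each $t^*\in\children(t)$; and the single global variable $r$. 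By construction, each bag grows by an additive $\mathcal{O}(k)$ term, hence $\width(\mathcal{T}')\in\mathcal{O}(k)$.

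The bulk of the verification is to confirm that $\mathcal{T}'$ is a valid TD of $\primal{\mathcal{R}_{\Stab,\disj}(\CF,\mathcal{T})}$. This amounts to checking for each formula of the reduction that its variables co-occur in some bag: Formula~(\ref{stab:def})--(\ref{stab:remove}) behave exactly as in Theorem~\ref{thm:simple}; Formula~(\ref{stab:mod2}) stays within a bag that already contains $a$ and the variables of $c$; Formula~(\ref{red:mod2}) additionally involves $r$ and the primed copies of $\var(c)$, all placed in $\chi'(t)$ by construction; and Formula~(\ref{red:subset})--(\ref{red:removestricts}) only combine the $s$-variables of a node with those of its children, which is handled by the explicit child references in $\chi'(t)$. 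The connectedness property for the fresh variables follows from that of $\mathcal{T}$ since each fresh variable is introduced in exactly the bags of a contiguous subtree.

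I expect the main obstacle to lie in the subset-minimality block: one must be careful that the variables witnessing strict inclusion ($s^t$, $s_b^t$) interact only with bag-local atoms and with children's $s^{t^*}$, so that no long-range edge in the primal graph is created and the width bound is preserved; once this is in place, the width of~$\mathcal{T}'$ is $\mathcal{O}(k)$ and Proposition~\ref{prop:qbf} with $\ell=3$ delivers the claimed bound $\tower(3,\mathcal{O}(\tw{\primal{\CF}}))\cdot\poly(\Card{A})$.
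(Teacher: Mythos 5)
Your plan is correct and follows essentially the same route as the paper: augment the given TD with the bag-local auxiliary variables ($d_a^t$, the primed copies, $s^t$, $s_b^t$, the children's $s^{t^*}$, and the single variable $r$), observe that each bag grows only by $\mathcal{O}(k)$ so the result is a TD of $\primal{\mathcal{R}_{\Stab,\disj}(\CF,\mathcal{T})}$ of width $\mathcal{O}(k)$, and apply Proposition~\ref{prop:qbf} with $\ell=3$, exactly as in the proofs of Theorems~\ref{thm:simple} and~\ref{thm:prop}. The one ingredient you take for granted, the correctness of $\mathcal{R}_{\Stab,\disj}$, is likewise handled separately in the paper (Lemma~\ref{thm:corr-disj}), so nothing essential is missing from your argument.
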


While the cases above focus on the stable semantics, one could also encode the \emph{computation of further semantics} while reusing our encodings for different fragments of \RC{}s above. For decision problems, this leads to the results %
in the last column of Table~\ref{tbl:overviewresults}
and similar for credulous reasoning.
\begin{theorem}\label{thm:disj-tw-ub}
  For any \RAF~$\CF=(A,R,C)$ the problems $\cred_\semi$ and~$\cred_\stag$ for simple, propositio\-nal/tight, and disjunctive \RC{}s~$C$ can be solved in~$\tower(i, \mathcal{O}(\tw{\primal{\CF}}))\cdot\poly(\Card{A})$, where $i=2$, $i=3$, and $i=4$, respectively.
\end{theorem}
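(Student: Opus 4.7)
The plan is to assemble a DG reduction for each combination of RC class and semantics by combining and augmenting the encodings from Theorems~\ref{thm:simple},~\ref{thm:prop},~\ref{thm:tight}, and~\ref{thm:disj-tw}. First, I would replace the stable-semantics skeleton $\varphi_{\stab}$ by a treewidth-aware encoding $\varphi_{\adm}$ for $\cred_\semi$, respectively by $\varphi_{\conf}$ for $\cred_\stag$; both are easily obtained by the same TD-guided bookkeeping of defeated / attacked arguments as in Formulas~(\ref{stab:def})--(\ref{stab:conf}), essentially dropping Formula~(\ref{stab:remove}) and adding an auxiliary clause that propagates ``defended'' up the TD for the admissible case. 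The credulous argument $c\in A$ is forced into the extension by adding the unit clause $c$. The validity of the RCs on the chosen extension $E$ is handled exactly as in the corresponding $\cons_\stab$ encoding: $\varphi_\simple$ for simple, $\varphi_\prop$ for propositional, $\varphi_\tight$ for tight, and $\varphi_{\textsf{red}}\wedge\varphi_{\textsf{subs}}$ for disjunctive RCs, each contributing the expected quantifier rank $1$, $2$, $2$, or $3$.

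On top of this ``$E$ is a candidate $\sigma$-extension'' formula, I would then attach one extra alternation to express the $\semi$/$\stag$ maximality: universally quantify over an alternative set $E'\subseteq A$ and over a second copy of the defeated-witness variables $D'$, and require that $E'$ either fails to be admissible (resp.\ conflict-free), or $(E')^+_R$ does not strictly contain $E^+_R$, or the RCs of $E'$ are not invalidated. The ``strict containment'' $(E')^+_R\supsetneq E^+_R$ is encoded in a treewidth-preserving way by reusing the idea of the $\varphi_{\textsf{subs}}$-block from Theorem~\ref{thm:disj-tw}: a witness bit $s^t$ per bag propagates upward the existence of a node where $E^+_R$ has no element but $(E')^+_R$ does, and $s^{\rootOf(T)}$ is enforced. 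Non-strict inclusion is covered locally per bag by implications among the corresponding defeated variables. The failure of invalidation of $C(E')$ then introduces exactly one existential block of the same nature as the inner RC block of the candidate, which is why the final quantifier rank grows by one compared to the $\cons_\stab$ case: rank $2$ for simple, $3$ for propositional/tight, and $4$ for disjunctive RCs.

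To obtain the claimed runtime, I would follow the recipe of Theorems~\ref{thm:simple}--\ref{thm:disj-tw}: start from an arbitrary TD $\mathcal T=(T,\chi)$ of $\primal{\CF}$ (with constant-bounded children per node) and extend each bag $\chi(t)$ by the auxiliary variables introduced at $t$, namely the defeated variables for $E$ and $E'$, the RC witnesses $w^t$/$w_c^t$ or justification variables $j_x^t$/$j_c^t$ (tight case), the reduct-subset witnesses $s^t$/$s_b^t$ (disjunctive case), and the new maximality witnesses; each of these contributes only $\mathcal O(1)$ variables per bag, so the width of the resulting TD of the primal graph of the QBF stays in $\mathcal O(\tw{\primal{\CF}})$. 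Plugging this TD and the quantifier rank $i\in\{2,3,4\}$ into Proposition~\ref{prop:qbf} yields the $\tower(i,\mathcal O(\tw{\primal{\CF}}))\cdot\poly(\Card{A})$ bound.

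The main obstacle is the maximality block: one must compare two extensions and their ranges $^+_R$ simultaneously, while keeping the comparison strictly local (so that the widened TD still decomposes the new QBF). I expect the key technical step to be to write the strict-containment check on $^+_R$ as a bag-local propagation of a single witness along $T$, analogous to $\varphi_{\textsf{subs}}$, but using the defeated-variables of $E$ and $E'$ in place of the reduct atoms; all other pieces are essentially routine adaptations of the encodings already developed in Section~\ref{sec:fpt}.
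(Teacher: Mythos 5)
Your plan is essentially the paper's own (very terse) proof: the paper likewise builds a decomposition-guided QBF encoding that combines a treewidth-aware encoding of semi-stable/stage extensions (it simply cites Fichte et al.\ for the maximality/counter-example machinery you reconstruct via the universal block over $E'$ with TD-guided range-comparison witnesses) with the RC-handling blocks of Section~\ref{sec:fpt}, and then invokes Proposition~\ref{prop:qbf}. Your quantifier bookkeeping (final ranks $2,3,4$ after merging the universal counter-example block with the universal RC block of the candidate, and the consistency check for $C(E')$ with the adjacent existential block) and the linear width accounting are consistent with the claimed bounds, so the proposal is correct and follows the same route, only spelled out in more detail.
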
~\\[-2.75em]
\begin{proof}[Proof (Idea)]
  We encode the computation of semi-stable and stage extensions
  similar to Fichte~et~al.~\cite{FichteEtAl21}, which linearly preserves treewidth.
  To this we add the encoding of handling simple, propositional/tight, and disjunctive rejection constraints, as above.
\end{proof}

\subsection{Matching Lower Bounds under ETH}\label{sec:lb}

Unfortunately, the above runtime results can not be significantly
improved, which we establish by focusing on lower bounds.  
The reductions above preserve treewidth linearly.

\begin{theorem}\label{thm:preserve}
The reductions in the proof of Theorem~\ref{thm:hard} %
for \RC{}s linearly preserve the treewidth. %
\end{theorem}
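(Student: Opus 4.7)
The plan is to treat each of the four reductions from Theorem~\ref{thm:hard} (simple, propositional, tight, disjunctive) uniformly: given a TD~$\mathcal{T}=(T,\chi)$ of~$\primal{\varphi}$ for the input (Q)BF~$\varphi$, lift it to a TD~$\mathcal{T}'$ of the primal graph~$\primal{\CF}$ of the constructed \RAF~$\CF$, using only a constant multiplicative blow-up per bag.

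The lifting rule is as follows. Set $\chi'(t)\eqdef\chi(t)\cup\{\,v'\mid v\in\chi(t),\, v'\text{ is a copy of }v\text{ introduced by the reduction}\,\}$ for each node~$t$ of~$T$. For the propositional case this only adds $X$-copies, yielding $|\chi'(t)|\le 2|\chi(t)|$; for the tight case one additionally duplicates $Y$-atoms, and for the disjunctive case one additionally duplicates $Y$- and $Z$-atoms, each contributing a constant factor. For the two reductions that introduce fresh per-clause atoms, namely the argument~$v_c$ in the simple case and the auxiliary atom~$s_c$ (per clause~$c$) in the disjunctive case, I additionally attach, for every clause~$c$, one fresh leaf bag under some node of~$T$ whose bag contains $\var(c)$; such a node exists because $\var(c)$ is a clique in~$\primal{\varphi}$, and the leaf contents are $\var(c)$ augmented with the clause-specific atom together with the relevant copies.

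The verification then splits into the three TD conditions. Vertex coverage is immediate, since every new vertex of~$\primal{\CF}$ is either a copy placed in all bags containing its original, or a per-clause atom placed in a dedicated leaf. Edge coverage is the central step: every edge of~$\primal{\CF}$ either (a) already comes from~$\primal{\varphi}$ and is covered, (b) is an attack~$\{x,x'\}$, covered by any bag containing~$x$ which now also contains~$x'$, (c) stems from a rule or clause inside some~$C(a)$ whose atoms all lie on a set of the form $\var(c)\cup\var(c)'\cup\{s_c\}$ for some clause~$c$, and is therefore covered by the corresponding augmented bag or fresh leaf, or (d) is an argument-to-RC-variable edge~$\{a,v\}$, covered by the same token. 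Connectedness holds because copies~$v'$ occupy exactly the (connected) set of bags that contain~$v$, and per-clause atoms occupy a single bag.

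I expect the main obstacle to be the edge-coverage bookkeeping for the tight and disjunctive reductions, where a rule spawned from a single clause~$c$ can mix primed and unprimed atoms from~$X$, $Y$, and $Z$ together with~$s_c$; by inspection, however, every such rule is supported on~$\var(c)\cup\var(c)'\cup\{s_c\}$, so the augmented bag or its fresh leaf suffices. Combining the three checks gives $\width(\mathcal{T}')\le c\cdot\width(\mathcal{T})+O(1)$ for a small constant~$c\in\{2,3,4\}$ depending on the reduction, which is precisely the claimed linear preservation of treewidth.
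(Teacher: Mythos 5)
Your proposal is correct and follows essentially the same route as the paper: lift the given TD of $\primal{\varphi}$ bag-by-bag by adding the primed copies (constant-factor blow-up), and treat the per-clause atoms separately so they never accumulate in one bag. The only differences are cosmetic or additive: where you attach a fresh leaf bag containing $\var(c)$ plus its copies and the clause atom at a clique-containing node, the paper instead first puts all atoms $v_c$ into the bags and then splits nodes into copies each carrying a single $v_c$; and you spell out the disjunctive case (with the per-clause auxiliary atom), which the paper's proof leaves implicit, handling explicitly only the propositional, simple, and tight reductions.
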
~\\[-2.75em]%
\begin{proof}
We show this result by taking a TD~$\mathcal{T}=(T,\chi)$ of the primal graph~$\primal{\varphi}$ of the corresponding matrix (formula)~$\varphi$.
From this we define a TD~$\mathcal{T}'=(T,\chi')$ of~$\primal{\CF}$ of the constructed \RAF~$\CF$.

``$\props$ \RC{}s'': %
For every node~$t$ of~$T$, 
we define~$\chi'(t)\eqdef\chi(t) \cup \{x'\mid x\in \chi(t)\}$. 
Indeed, $\mathcal{T}'$ is a TD of~$\primal{D}$ and~$\Card{\chi'(t)}\leq 2\cdot\Card{\chi(t)}$ for every node~$t$ of~$T$.

``$\simple$ \RC{}s'': %
For every node~$t$ of~$T$, 
we define~$\chi'(t)\eqdef\chi(t) \cup \{x'\mid x\in \chi(t)\} \cup \{v_c \mid c\in \varphi\}$. 
Indeed, $\mathcal{T}'$ is a TD of~$\primal{\CF}$. 
Observe that there might be bags of nodes containing more than~$\mathcal{O}(\width(\mathcal{T}))$ many elements of the form~$v_c$.
However, these arguments~$v_c$ do not have to occur in those bags together.
So, one can slightly modify~$\mathcal{T}'$ by copying such nodes in the tree, such that each copy only gets one of these variables~$v_c$. We refer to the resulting TD~$\mathcal{T}''=(T'',\chi'')$ of~$\primal{\CF}$ 
Then, we have that~$\width(\mathcal{T}'') \leq 2\cdot\width(\mathcal{T}) + 1$ for every node~$t$ of~$T$.
``$\tight$ \RC{}s'': %
The proof proceeds similar to the case of~$\props$ \RC{}s.
\end{proof}

\begin{theorem}\label{thm:tw-lb}
  Given any \RAF~$\CF=(A,R,C)$. 
  Under ETH, $\cons_\conf$ for simple ($i=1$), propositio\-nal/tight ($i=2$), and disjunctive ($i=3$) \RC{}s cannot be decided in time
  $\tower(i, {o}(\tw{\primal{\CF}}))\cdot\poly(\Card{A})$; %
  $\cred_\semi$ and~$\cred_\stag$ cannot be decided in time
  $\tower(i+1, {o}(\tw{\primal{\CF}}))\cdot\poly(\Card{A})$.
\end{theorem}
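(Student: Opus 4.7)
The plan is to combine Proposition~\ref{qbf:lb} with the reductions of Theorem~\ref{thm:hard} (for $\cons_\conf$) and the reductions of Theorem~\ref{thm:credsemistab} (for $\cred_\semi$ and~$\cred_\stag$), after verifying that each of these reductions preserves treewidth linearly.

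For the first statement, I would argue as follows. Theorem~\ref{thm:hard} gives, for $i\in\{1,2,3\}$, a polynomial-time reduction from an $i$-\QSAT{} instance $\varphi$ to an \RAF $\CF$ whose class of \RC{}s matches (simple, propositional/tight, and disjunctive, respectively) and with $\varphi$ valid iff $\cons_\conf(\CF)\neq\emptyset$. By Theorem~\ref{thm:preserve}, there is a constant $c$ such that $\tw{\primal{\CF}}\leq c\cdot \tw{\primal{\varphi}}$, and clearly $\Card{A}\leq\poly(\Card{\var(\varphi)})$. Hence, an algorithm solving $\cons_\conf$ in time $\tower(i,{o}(\tw{\primal{\CF}}))\cdot\poly(\Card{A})$ would yield an algorithm for $i$-\QSAT{} running in time $\tower(i,{o}(\tw{\primal{\varphi}}))\cdot\poly(\Card{\var(\varphi)})$, contradicting Proposition~\ref{qbf:lb}.

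For the second statement, the same template applies but we now start from the constructions in the proof of Theorem~\ref{thm:credsemistab}, which reduce a suitable $(i+1)$-\QSAT{} instance to $\cred_\semi$/$\cred_\stag$ on an \RAF whose \RC{}-class is again the one corresponding to $i\in\{1,2,3\}$. The key remaining task is to verify that these reductions also preserve treewidth linearly. Starting from a TD $(T,\chi)$ of the primal graph of the outer $\QSAT$ formula, I would construct $(T,\chi')$ by extending every bag with the corresponding copies and auxiliary arguments introduced by the reduction: for every variable $y$ in a bag we add $y'$ and $\hat y,\hat y'$, the literals $\ell\in c$ are already covered by the clause-vertex $c$ being adjacent to its variables (so existing bags containing $c$ can absorb the attack edges $(\ell,c)$), and the four globally appearing vertices $t,t',b$ (as well as, for the propositional/tight/disjunctive cases, the additional RC-only variables inside $C(t)$) can be placed into every bag at a constant additive cost. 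The edges in the attack relation and those induced by the \RC{}s on $t$ are then covered, and $\width(\chi')\leq c'\cdot \width(\chi)$ for a constant $c'$, reusing the bookkeeping of Theorem~\ref{thm:preserve} for the propositional and disjunctive fragments. Plugging this into Proposition~\ref{qbf:lb} for rank $i+1$ yields the claimed lower bound.

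The main obstacle is the treewidth-preservation check for the $\cred_\semi$/$\cred_\stag$ reduction: the clause-arguments $\mathcal{C}$ together with $t$ form a star that, if handled naively, would blow up the width by $\Card{\mathcal{C}}$. I would avoid this by the standard copy-node trick already used in the proof of Theorem~\ref{thm:preserve} (splitting nodes so that each copy carries only a single $c\in\mathcal{C}$), which keeps the width increase additive and independent of $\Card{\mathcal{C}}$; in the propositional/tight/disjunctive variants, where the clauses live inside $C(t)$ rather than as arguments, the same trick applies directly to $C_t$. Once linear preservation is established, the ETH-based reduction argument is essentially mechanical.
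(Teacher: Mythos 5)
Your proposal follows essentially the same route as the paper: assume a faster algorithm, invoke the reductions of Theorem~\ref{thm:hard}, use their linear treewidth preservation (Theorem~\ref{thm:preserve}), and contradict Proposition~\ref{qbf:lb}. The paper's proof text only spells this out for $\cons_\conf$, so your explicit treewidth-preservation check for the Theorem~\ref{thm:credsemistab} constructions (copy nodes for the clause arguments, adding $t$, $t'$, $b$ and the primed/hatted copies at constant extra width) correctly fills in a step the paper leaves implicit and is consistent with its intended argument.
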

\begin{proof}
  Assume towards a contradiction that this is not the case for some~$i$, i.e.,
  we can decide~$\conf$ with the desired \RC{}s in time~$\tower(i, {o}(\tw{\primal{\CF}}))\cdot\poly(\Card{A})$.
  Recall that the reductions of Theorem~\ref{thm:hard} are correct and linearly preserve %
  treewidth by Theorem~\ref{thm:preserve}. Consequently, %
  we reduce from~$i\hy\QSAT$, contradicting %
  Proposition~\ref{qbf:lb}. 
  \end{proof}
  
  In the following, we prove that the reduction~$\mathcal{R}_{\stab,\disj}$ is indeed correct.
  The correctness for the other (simpler) cases is established similarly.
  
  \begin{lemma}[Correctness]\label{thm:corr-disj}
  Given any \RAF~$\CF=(A,R,C)$ and a TD~$\mathcal{T}=(T,\chi)$ of~$\primal{\CF}$. Then, there is a stable extension of~$\CF$ if and only if~$\mathcal{R}_{\stab,\disj}(\CF,\mathcal{T})$ is satisfiable. 
  \end{lemma}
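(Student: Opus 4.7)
The plan is to follow the standard correctness template for a decomposition-guided QBF reduction: the outer block $\exists A.\exists D$ picks an extension candidate $E\subseteq A$ together with its witnesses of defeat, while the inner $\forall B.\exists B',S,r$ expresses that the disjunctive program $\Pi_E \eqdef C(E) \cup E \cup \bigcup_{a\in A\setminus E}\{\bot\leftarrow a\}$ admits no answer set. The stability part of the encoding, namely that $\varphi_{\stab}(A,D)$ holds iff $E$ is a stable extension of $(A,R)$, can be borrowed from $\mathcal{R}_{\Stab}$ as in~\cite{FichteEtAl21}; the real work is to show that the added matrix $\varphi_{\textsf{red}},\varphi_{\textsf{subs}},\varphi_{\prop}$ faithfully encodes the inconsistency of $\Pi_E$.

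For the direction ($\Leftarrow$), I would assume $\mathcal{R}_{\stab,\disj}(\CF,\mathcal{T})$ is satisfied, set $E\eqdef\{a\in A\mid a\text{ is true}\}$, and extract stability of $E$ in $(A,R)$ from $\varphi_{\stab}$. I then suppose towards a contradiction that $\Pi_E$ has an answer set $M$; necessarily $M\cap A = E$, so write $M = E\cup M_B$ with $M_B\subseteq B$. Instantiating the universal block with $B:=M_B$ yields a witness $B',S,r$. If $r=\bot$, then the disjunct $\varphi_{\prop}(A,M_B)$ must hold, i.e., some rule of $C(E)$ is falsified by $M$, contradicting $M\models \Pi_E$. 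If $r=\top$, then Formula~(\ref{red:mod2}) forces $E\cup B'$ to satisfy the GL reduct $(C(E))^M$, while Formulas~(\ref{red:subset})--(\ref{red:removestricts}) force $B'\subsetneq M_B$ via a root-to-leaf chase of the $s^t,s_b^t$ chain, contradicting the subset-minimality of $M$.

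For ($\Rightarrow$), I would start from a stable extension $E$ of $\CF$, set $a$ true iff $a\in E$, and set $d_a^t$ true iff some $b\in E$ with $(b,a)\in R$ appears in a bag of the subtree of $\mathcal{T}$ rooted at $t$; a standard bag-wise induction then validates $\varphi_{\stab}$. Next I fix an arbitrary $M_B\subseteq B$ and split on whether $E\cup M_B$ models $C(E)$. If not, set $r:=\bot$, $B':=\emptyset$, $s^t:=\top$ and all $s_b^t:=\bot$: the $\varphi_{\prop}$ disjunct discharges the second conjunct, while the $\neg r$ disjuncts trivialise Formulas~(\ref{red:mod2}), (\ref{red:subset}), (\ref{red:stricts}), and Formulas~(\ref{red:stricts2})--(\ref{red:stricts3}) are vacuous with $s_b^t=\bot$. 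Otherwise $E\cup M_B\models \Pi_E$, but since $E$ is stable in $\CF$ we know $\Pi_E$ is inconsistent, so there exists $M'_B\subsetneq M_B$ with $E\cup M'_B$ a model of $(C(E))^M$; set $r:=\top$, $B':=M'_B$, and propagate the strict-inclusion witness $s^t$ upwards along $\mathcal{T}$, flagging $s_b^{\forg(b)}:=\top$ for each distinguishing $b\in M_B\setminus M'_B$.

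The main obstacle I anticipate is the careful bag-local accounting for the $S$-variables: arguing that the chain encoded by Formulas~(\ref{red:stricts})--(\ref{red:removestricts}) is satisfiable exactly when $B'\subsetneq B$ holds globally, using that every $b\in B$ lives in a unique forget node $\forg(b)$ of $\mathcal{T}$. The analogous inductive argument for the $d_a^t$ variables and the direct correspondence between Formula~(\ref{red:mod2}) and the GL reduct are routine by comparison. Correctness for the simple, propositional, and tight fragments then follows as a specialisation in which the $r$-branch with $\varphi_{\textsf{red}},\varphi_{\textsf{subs}}$ is either absent or collapses to a polynomial check.
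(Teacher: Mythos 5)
Your proposal is correct and follows essentially the same route as the paper's (sketched) proof: stability of $E$ is delegated to $\varphi_{\stab}$ along the TD, and the inner $\forall B.\exists B',S,r$ block is read exactly as ``for every candidate $M=E\cup M_B$, either $M\not\models C(E)\cup E$ (the $\varphi_{\prop}$ branch) or some $E\cup B'$ with $B'\subsetneq M_B$ models the GL reduct (the $r$ branch, with the $S$-chain witnessing strictness along the decomposition).'' Your write-up is in fact somewhat more explicit than the paper's sketch (e.g., the concrete assignments of $d_a^t$, $s^t$, $s_b^t$ and the observation that any model of the reduct must agree with $E$ on the argument atoms), but it is the same argument.
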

  \begin{proof}[Proof (Sketch)]~\\
  Recall the defined reduction Formulas~(\ref{red:mod2})--(\ref{red:removestricts}).
  
    ``$\Rightarrow$'': Let $E\subseteq A$ be a stable extension of $\CF$.
    Then $E$ is an extension of $(A,R)$.
    Moreover, either $\mathcal P =C(E) \cup E$ %
    is inconsistent, or for every model $M$ of $\mathcal P$, there is a model $M'\subset M$ for $\mathcal P ^M$.
    Clearly, the presence of such an $E$ is also realized by the TD $\mathcal T$ of $\mathcal G_{\CF}$, ensuring that there are sets $A$ and $D$ such that $\exists A,D .\varphi_{\stab}(A,D)$ is true.
    Since $\mathcal P$ is inconsistent then one of the following two cases hold for each set $M$ of atoms.
    (1) There is some rule $p:= H(p) \leftarrow B^+(p), B^-(p)$ such that $(H(p) \cup B^-(p))\cap M = \emptyset$ and $B^+(p) \subseteq M$.
    (2) There is a set $M'\subset M$, such that $M'$ is a model of $\mathcal P^M$.
    That is, for each $M$ either $M\not\models\mathcal P$ or there is an $M'\subset M$ such that $M'\models\mathcal P^M$.
    Moreover, the witness in each case (for contradiction in the case of $M\not\models\mathcal P$, and for satisfaction/subset minimality in the case of $M'\models\mathcal P^M$) is guided along the TD, proving that $\mathcal{R}_{\Stab,\disj}(\CF,\mathcal{T})$ is true.
  
    ``$\Leftarrow$'': Suppose that $\mathcal{R}_{\Stab,\disj}(\CF,\mathcal{T})$ is true. 
    Let $E$ be a set of arguments corresponding to variables (set to true) in $A$.
    The definition of $\varphi_{\stab}(A,D)$ (cf. (\ref{stab:def})--(\ref{stab:remove})) ensures that $E$ is a stable extension of $(A,R)$.
    Then the set $B$ (quantified as $\forall B$) either satisfies $\varphi_{\prop}(A,B)$, (i.e., it does not satisfy $\mathcal P$) or it satisfies $r$ (activates the reduct variable).
    This together with the guessed set $B'$ satisfies that $B'\subset B$ ((\ref{red:subset})--(\ref{red:stricts3})) and for every $a\in A$ (if $a$ is in the extension), the  variables $x,x'$ (simulating the program) are selected in such a way that the corresponding program is satisfied by the atoms chosen by $B'$ (\ref{red:mod2}).
    This ensures that $\mathcal P$ is not satisfiable since sets $B$ and $B'$ are nothing but variables corresponding to set of atoms simulating $M$ and $M'$.
    This completes the proof.
  \end{proof}

\section{Conclusion and Future Work}
We study flexible rejecting conditions (RCs) in abstract
argumentation, which provide more fine-grained conditions to commonly
studied rejection such as maximality.
It allows for scenarios where we are interested in not accepting
certain arguments in combination with others.
We establish under which situations rejection is harder than its
counterpart of acceptance conditions.
We provide valuable insights into the differences and interactions between rejection and acceptance in general, which by itself improves the understanding of abstract argumentation.
In fact, we present a detailed analysis of expressiveness and
complexity when \RC{}s consist of propositional formulas or ASP
programs.  Table~\ref{tbl:overviewresults} gives an overview of our
complexity results.  Our results show that rejection gives rise to
natural problems for the third and fourth levels of the polynomial
hierarchy.

It is notable that the higher expressivity and complexity for RCFs can be attributed to the shift from ``acceptance'' to ``rejection''.
Furthermore, although our results on simulating CAFs with RCFs utilize additional variables, this generalization is not the primary source of complexity increase.
This can be observed by noting that extending CAFs with auxiliary variables does not alter their reasoning complexity. 
The high-level idea is that CAFs model a formalism according to ``Exists Extension, Exists Assignment'', whereas our concept based on rejection follows the pattern ``Exists Extensions, Forall Assignments'', which makes RCFs more expressive (assuming $\Ptime \neq \NP$).

Investigations of \RAF{}s with additional semantics and \RC{}s are interesting for future work. 
For example, for \RC{}s with nonground programs of fixed arity, we expect complexity results up to the fifth level of the polynomial hierarchy.
We are also interested in implementing \RAF{}s and comparing their performance with AF solvers~\cite{iccma}. 
There are also meta-frameworks for treewidth, e.g.~\cite{CharwatWoltran19,HecherThierWoltran20}, which are relevant for the implementation of \RAF{}s using treewidth.

Improving the modeling with a stronger language (ASP) is not only natural, but also necessary for certain properties, based on standard assumptions in complexity theory.
While we understand that some may prefer modeling in classical logic over ASP, we find it difficult to follow the reasoning. 
We respect the preference for classical or low-level propositional logic and believe that higher languages should also be considered. 
This also opens up the possibility to consider other natural extension of ASP, such as first-order logic.

\paragraph*{Acknowledgments.}
Authors are ordered alphabetically.
The work has been carried out while Hecher visited the Simons
Institute at UC Berkeley.
Research is supported by 
the Austrian Academy of Sciences (\"OAW), DOC Fellowship;
the Austrian Science Fund (FWF), grants P30168 and J4656;
the Society for Research Funding in Lower Austria (GFF, Gesellschaft f\"ur Forschungsf\"orderung N\"O) grant ExzF-0004; the Vienna Science and Technology Fund (WWTF) grant ICT19-065;
ELLIIT funded by the Swedish government;
the Deutsche
Forschungsgemeinschaft (DFG, German Research Foundation), grants TRR 318/1 2021 – 438445824 and ME 4279/3-1 (511769688); the European Union’s Horizon Europe research and innovation programme within project ENEXA
(101070305); and the Ministry of Culture and Science of North Rhine-Westphalia (MKW NRW) within project SAIL, grant NW21-059D.

\bibliographystyle{plain}
\bibliography{references}

\begin{thebibliography}{10}

\bibitem{AlfanoEtAl21}
Gianvincenzo Alfano, Sergio Greco, Francesco Parisi, and Irina Trubitsyna.
\newblock Defining the semantics of abstract argumentation frameworks through logic programs and partial stable models (extended abstract).
\newblock In {\em {IJCAI}'21}, pages 4735--4739, August 2021.

\bibitem{DBLP:conf/argmas/AmgoudDL08}
Leila Amgoud, Caroline Devred, and Marie{-}Christine Lagasquie{-}Schiex.
\newblock A constrained argumentation system for practical reasoning.
\newblock In {\em {ArgMAS}'08}, pages 37--56, 2008.

\bibitem{AmgoudPrade09a}
Leila Amgoud and Henri Prade.
\newblock Using arguments for making and explaining decisions.
\newblock {\em Artif. Intell.}, 173(3-4):413--436, 2009.

\bibitem{Ben-EliyahuDechter94}
Rachel Ben-Eliyahu and Rina Dechter.
\newblock Propositional semantics for disjunctive logic programs.
\newblock {\em Ann. Math. Artif. Intell.}, 12(1):53--87, 1994.

\bibitem{DBLP:journals/ai/BesnardH01}
Philippe Besnard and Anthony Hunter.
\newblock A logic-based theory of deductive arguments.
\newblock {\em Artif. Intell.}, 128(1-2):203--235, 2001.

\bibitem{BiereEtAl21}
Armin Biere, Marijn Heule, Hans van Maaren, and Toby Walsh, editors.
\newblock {\em Handbook of Satisfiability (2nd Edition)}.
\newblock IOS Press, 2021.

\bibitem{BodlaenderKoster08}
Hans~L. Bodlaender and Arie~M. Koster.
\newblock Combinatorial optimization on graphs of bounded treewidth.
\newblock {\em Comput. J.}, 51(3):255--269, 2008.

\bibitem{BondarenkoDungKowalski1997}
A.~Bondarenko, P.M. Dung, R.A. Kowalski, and F.~Toni.
\newblock An abstract, argumentation-theoretic approach to default reasoning.
\newblock {\em Artif. Intell.}, 93(1):63--101, 1997.

\bibitem{BondyMurty08}
John~A. Bondy and Uppaluri S.~R. Murty.
\newblock {\em Graph theory}, volume 244 of {\em Graduate Texts in Mathematics}.
\newblock Springer, 2008.

\bibitem{BrewkaDelgrandeRomero15a}
Gerhard Brewka, James Delgrande, Javier Romero, and Torsten Schaub.
\newblock asprin: Customizing answer set preferences without a headache.
\newblock In {\em AAAI'15}, February 2015.

\bibitem{BrewkaStrassEllmauthaler13a}
Gerhard Brewka, Hannes Strass, Stefan Ellmauthaler, Johannes~Peter Wallner, and Stefan Woltran.
\newblock Abstract dialectical frameworks revisited.
\newblock In {\em {IJCAI}'13}, pages 803--809, 2013.

\bibitem{BrewkaWoltran10}
Gerhard Brewka and Stefan Woltran.
\newblock Abstract dialectical frameworks.
\newblock In {\em {KR}'10}, pages 102--111, 2010.

\bibitem{CaminadaSchulz17}
Martin Caminada and Claudia Schulz.
\newblock On the equivalence between assumption-based argumentation and logic programming.
\newblock {\em J. Artif. Intell. Res.}, 60:779--825, 2017.

\bibitem{CharwatWoltran19}
G{\"{u}}nther Charwat and Stefan Woltran.
\newblock Expansion-based {QBF} solving on tree decompositions.
\newblock {\em Fundam. Inform.}, 167(1-2):59--92, 2019.

\bibitem{Chen04a}
Hubie Chen.
\newblock Quantified constraint satisfaction and bounded treewidth.
\newblock In {\em {ECAI}'04}, pages 161--170, 2004.

\bibitem{Coste-MarquisDevredMarquis06a}
Sylvie Coste{-}Marquis, Caroline Devred, and Pierre Marquis.
\newblock Constrained argumentation frameworks.
\newblock In {\em KR'06}, pages 112--122, 2006.

\bibitem{DBLP:conf/comma/DevredDLN10}
Caroline Devred, Sylvie Doutre, Claire Lef{\`{e}}vre, and Pascal Nicolas.
\newblock Dialectical proofs for constrained argumentation.
\newblock In {\em {COMMA}'10}, pages 159--170, 2010.

\bibitem{Dung95a}
Phan~Minh Dung.
\newblock On the acceptability of arguments and its fundamental role in nonmonotonic reasoning, logic programming and n-person games.
\newblock {\em Artif. Intell.}, 77(2):321--357, 1995.

\bibitem{DungKowalskiToni06}
P.M. Dung, R.A. Kowalski, and F.~Toni.
\newblock Dialectic proof procedures for assumption-based, admissible argumentation.
\newblock {\em Artif. Intell.}, 170(2):114--159, 2006.

\bibitem{DungMancarellaToni2007}
P.M. Dung, P.~Mancarella, and F.~Toni.
\newblock Computing ideal sceptical argumentation.
\newblock {\em Artif. Intell.}, 171(10):642--674, 2007.

\bibitem{DvorakD17}
Wolfgang Dvor{\'{a}}k and Paul~E. Dunne.
\newblock Computational problems in formal argumentation and their complexity.
\newblock {\em J. of Logics and their Applications}, 4(8), 2017.

\bibitem{DvorakGresslerRapberger21a}
Wolfgang Dvor{\'{a}}k, Alexander Gre{\ss}ler, Anna Rapberger, and Stefan Woltran.
\newblock The complexity landscape of claim-augmented argumentation frameworks.
\newblock In {\em {AAAI}'21}, pages 6296--6303, 2021.

\bibitem{DvorakWoltran10}
W.~Dvo{{\v r}{\'a}}k and S.~Woltran.
\newblock Complexity of semi-stable and stage semantics in argumentation frameworks.
\newblock {\em Information Processing Letters}, 110(11):425 -- 430, 2010.

\bibitem{EiterGottlob95}
Thomas Eiter and Georg Gottlob.
\newblock On the computational cost of disjunctive logic programming: Propositional case.
\newblock {\em Ann. Math. Artif. Intell.}, 15(3--4):289--323, 1995.

\bibitem{HecherFandinno21}
Jorge Fandinno and Markus Hecher.
\newblock Treewidth-aware complexity in {ASP:} not all positive cycles are equally hard.
\newblock In {\em {AAAI}'21}, pages 6312--6320, 2021.

\bibitem{FichteHecherPfandler20}
Johannes~K. Fichte, Markus Hecher, and Andreas Pfandler.
\newblock {Lower Bounds for QBFs of Bounded Treewidth}.
\newblock In {\em {LICS}'20}, pages 410--424, 2020.

\bibitem{FichteEtAl21}
Johannes~Klaus Fichte, Markus Hecher, Yasir Mahmood, and Arne Meier.
\newblock Decomposition-guided reductions for argumentation and treewidth.
\newblock In {\em {IJCAI}'21}, pages 1880--1886, 2021.

\bibitem{GagglEtAl15}
Sarah~Alice Gaggl, Norbert Manthey, Alessandro Ronca, Johannes~Peter Wallner, and Stefan Woltran.
\newblock Improved answer-set programming encodings for abstract argumentation.
\newblock {\em Theory Pract. Log. Program.}, 15(4-5):434--448, 2015.

\bibitem{GebserKaminskiKaufmannSchaub12}
Martin Gebser, Roland Kaminski, Benjamin Kaufmann, and Torsten Schaub.
\newblock {\em Answer Set Solving in Practice}.
\newblock Morgan \& Claypool, 2012.

\bibitem{GelfondLifschitz91}
Michael Gelfond and Vladimir Lifschitz.
\newblock Classical negation in logic programs and disjunctive databases.
\newblock {\em New Generation Comput.}, 9(3/4):365--386, 1991.

\bibitem{HecherThierWoltran20}
Markus Hecher, Patrick Thier, and Stefan Woltran.
\newblock Taming high treewidth with abstraction, nested dynamic programming, and database technology.
\newblock In {\em {SAT}'20}, pages 343--360, 2020.

\bibitem{HeyninckThimmKern-Isberner22a}
Jesse Heyninck, Matthias Thimm, Gabriele Kern{-}Isberner, Tjitze Rienstra, and Kenneth Skiba.
\newblock Conditional abstract dialectical frameworks.
\newblock In {\em {AAAI}'22}, pages 5692--5699, 2022.

\bibitem{ImpagliazzoPaturiZane01}
Russell Impagliazzo, Ramamohan Paturi, and Francis Zane.
\newblock Which problems have strongly exponential complexity?
\newblock {\em J. Comput. Syst. Sci.}, 63(4):512--530, 2001.

\bibitem{Janhunen06}
Tomi Janhunen.
\newblock Some (in)translatability results for normal logic programs and propositional theories.
\newblock {\em Journal of Applied Non-Classical Logics}, 16(1-2):35--86, 2006.

\bibitem{KanchanasutStuckey92}
Kanchana Kanchanasut and Peter~J. Stuckey.
\newblock Transforming normal logic programs to constraint logic programs.
\newblock {\em Theor. Comput. Sci.}, 105(1):27--56, 1992.

\bibitem{iccma}
Jean{-}Marie Lagniez, Emmanuel Lonca, Jean{-}Guy Mailly, and Julien Rossit.
\newblock Introducing the 4th international competition on computational models of argumentation.
\newblock In {\em SAFA'20}, pages 80--85, 2020.

\bibitem{LinZhao03}
Fangzhen Lin and Jicheng Zhao.
\newblock On tight logic programs and yet another translation from normal logic programs to propositional logic.
\newblock In {\em {IJCAI}'03}, pages 853--858, 2003.

\bibitem{OsorioEtAl05}
Mauricio Osorio, Claudia Zepeda, Juan~Carlos Nieves, and Ulises Cort{\'{e}}s.
\newblock Inferring acceptable arguments with answer set programming.
\newblock In {\em {ENC}'05}, pages 198--205, 2005.

\bibitem{Pippenger97}
Nicholas Pippenger.
\newblock {\em Theories of computability}.
\newblock Cambridge University Press, 1997.

\bibitem{RagoCocarascuToni18a}
Antonio Rago, Oana Cocarascu, and Francesca Toni.
\newblock Argumentation-based recommendations: Fantastic explanations and how to find them.
\newblock In {\em {IJCAI}'18}, pages 1949--1955, 2018.

\bibitem{SaAlcantara21a}
Samy S{\'{a}} and Jo{\~{a}}o F.~L. Alc{\^{a}}ntara.
\newblock Assumption-based argumentation is logic programming with projection.
\newblock In {\em {ECSQARU}'21}, pages 173--186, 2021.

\bibitem{DBLP:conf/ipmu/SedkiY16}
Karima Sedki and Safa Yahi.
\newblock Constrained value-based argumentation framework.
\newblock In {\em {IPMU}'16}, pages 265--278, 2016.

\end{thebibliography}

\end{document}